\newtheorem{theorem}{Theorem}[section]
\newtheorem{lemma}[theorem]{Lemma}
\theoremstyle{definition}
\newtheorem{definition}[theorem]{Definition}
\DeclareMathOperator{\E}{\mathbb{E}}
\newcommand{\eps}{\epsilon}
\newcommand{\wt}{\widetilde}
\newcommand{\KL}{\mathsf{KL}}
\newcommand{\mK}{\mathcal{K}}
\newcommand{\poly}{\mathsf{poly}}
\newcommand{\Ent}{\mathsf{Ent}}
\newcommand{\unif}{\mathsf{unif}}
\newcommand{\DCE}{\mathsf{DCE}}
\newcommand{\smooth}{\mathsf{smooth}}
\newcommand{\sml}{\mathsf{small}}
\definecolor{DarkBrown}{rgb}{0.4,0.13,0.13}
\title{High dimensional online calibration in polynomial time}
\author{Binghui Peng\\
Stanford University\\
\texttt{binghuip@stanford.edu}}
\date{\today}
\begin{document}
\maketitle

\begin{abstract}
In online (sequential) calibration, a forecaster predicts probability distributions over a finite outcome space $[d]$ over a sequence of $T$ days, with the goal of being calibrated. While asymptotically calibrated strategies are known to exist, they suffer from the curse of dimensionality: the best known algorithms require $\exp(d)$ days to achieve non-trivial calibration.

In this work, we present the first asymptotically calibrated strategy that guarantees non-trivial calibration after a polynomial number of rounds. Specifically, for any desired accuracy $\epsilon > 0$, our forecaster becomes $\epsilon$-calibrated after $T = d^{O(1/\epsilon^2)}$ days. We complement this result with a lower bound, proving that at least $T = d^{\Omega(\log(1/\epsilon))}$ rounds are necessary to achieve $\eps$-calibration. Our results resolve the open questions posed by \cite{abernethy2011does,hazan2012weak}.

Our algorithm is inspired by recent breakthroughs in swap regret minimization \cite{peng2023fast,dagan2024external}. Despite its strong theoretical guarantees, the approach is remarkably simple and intuitive: it randomly selects among a set of sub-forecasters, each of which predicts the empirical outcome frequency over recent time windows.
\end{abstract}
\clearpage
\newpage

\section{Introduction}
\label{sec:intro}
In online forecasting, a forecaster aims to predict the probability distribution of outcome over a sequence of $T$ days. On each day $t \in [T]$, the forecaster outputs a distribution $p_t\in\Delta_{d}$ over the outcome space $[d]$, and then observes the reazlied outcome $X_t \in [d]$.

A widely used metric for evaluating a forecasting performance is {\em calibration}. Informally, calibration assesses how well the predicted distributions align with actual outcomes over time: whenever the forecaster predicts a distribution $p$, the empirical distribution of outcomes on such days should be close to $p$. Formally, the total calibration error is defined as:
\begin{align*}
\text{calibration-error} := \sum_{p \in \Delta_d} \sum_{t \in [T]} \Big\| (p - X_t) \cdot \mathsf{1}[p_t = p] \Big\|_1.
\end{align*}
A forecaster is said to be $\eps$-calibrated if its total calibration error is at most $\eps T$. 
Intuitively, this means that the predictions are, on average, $\eps$-close to the observed outcomes.

Calibrated forecasting has a rich history, tracing back to the foundational work of \cite{brier1950verification,dawid1982well,oakes1985self}. 
The first algorithm for calibrated forecasting was introduced in the seminal work of \cite{foster1997calibrated,foster1998asymptotic}, which provided an asymptotically calibrated forecasting strategy: The forecaster becomes $\eps$-calibrated after $T = (\frac{1}{\eps})^{\Theta(d)}$ days. 
Remarkably, this guarantee holds even when the outcomes are adversarially chosen, and without any prior knowledge of the outcome distribution.



Beyond asymptotic guarantees, one seeks to design forecasting strategies that are as statistically efficient as possible. 
For the important case of binary outcome ($d = 2$), a recent line of work \cite{qiao2021stronger,dagan2024breaking} have focused on obtaining the optimal statistical complexity and \cite{dagan2024breaking} prove that the optimal calibration error is within $[T^{0.543}, T^{2/3-\delta}]$, where $\delta > 0$ is an absolute constant.

\paragraph{High dimensional calibration} 
This focus of this work is on high dimensional (or multi-class) calibration, where the outcome space consists of $d > 2$ possible values. Such settings frequently arise in applications like image classification~\cite{guo2017calibration}, next-token prediction~\cite{jiang2021can}, and strategic forecasting in games~\cite{foster1997calibrated}.
In this regime, all known algorithms~\cite{foster1997calibrated, abernethy2011does, hart2022calibrated, fishelson2025full} suffer from an {\em exponential dependence on $d$}: achieving $\eps$-calibration requires at least $T = (\frac{1}{\eps})^{\Omega(d)}$ days.

Does an efficient high dimensional calibrated forecasting strategy exist? This was posed as an open problem at COLT 2011 by \cite{abernethy2011does}. Subsequently, \cite{hazan2012weak} proved that no polynomial time {\em deterministic} forecasting strategy can guarantee {\em $\eps$-weak calibration} for $\eps = 1/\poly(d)$, assuming $\mathsf{PPAD} \not\subseteq \mathsf{RP}$. 
This result crucially depends on determinism: while weak calibration admits deterministic strategies \cite{kakade2008deterministic}, deterministic strategies for sequential calibration simply do not exist (regardless of runtime) \cite{oakes1985self}.
The statistical complexity of {\em randomized} sequential calibration is left as an open question by \cite{hazan2012weak}, and since then, no significant progress has been made.

The core challenge of high dimensional calibration lies in the exponential size of the prediction space, and motivated by this challenge, recent work \cite{zhao2021calibrating,roth2024forecasting} have conjectured that multi-class calibration requires exponential time. 
In response, a variety of alternative notions have been proposed \cite{zhao2021calibrating, noarov2023high, kleinberg2023u, gopalan2024computationally} to sidestep this intractability while retaining useful properties of calibration.





\subsection{Our results}
In this work, we revisit the problem of high dimensional online calibration and show that for any fixed accuracy parameter $\eps > 0$, there exists a randomized forecasting strategy that achieves $\eps$-calibration in polynomial number of rounds.

\begin{restatable}[]{theorem}{calibrationAlgo}
\label{thm:calibration}
For any $\eps > 0$, there is a randomized forecasting strategy that becomes $\eps$-calibrated after $T = d^{\tilde{O}(1/\eps^2)}$ days.
\end{restatable}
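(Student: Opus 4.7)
The plan is to reduce high-dimensional online calibration to a swap regret minimization problem over a discretized simplex, and then invoke the recent polynomial-time swap regret algorithms of \cite{peng2023fast, dagan2024external}. First I would fix an $\eps$-net $\mathcal{N}_{\eps} \subset \Delta_d$ of size $N = (O(1/\eps))^d$ in $\ell_1$ distance and restrict the forecaster to predict only at points of $\mathcal{N}_{\eps}$, at the cost of an additive $\eps T$ calibration error. Next I would invoke a Foster--Vohra style reduction: viewing each net point $p$ as an ``action'' and associating to round $t$ the vector-valued loss $\ell_t(p) = p - X_t$, the calibration error is exactly $\sum_{p \in \mathcal{N}_{\eps}} \bigl\lVert \sum_{t : p_t = p}(p-X_t) \bigr\rVert_1$, which is precisely the swap regret against the swap function $\phi^\star(p) = \bar X_p$ that replaces $p$ with the empirical mean of outcomes on rounds when $p$ was played.

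With the reduction in hand, I would apply the polynomial-time swap regret algorithms of \cite{peng2023fast, dagan2024external}, which achieve $\eps T$ swap regret over $N$ actions after $T = (\log N)^{\tilde O(1/\eps^2)}$ rounds. Substituting $N = (1/\eps)^{O(d)}$ and absorbing $\log(1/\eps)$ factors into $\tilde O$ yields the claimed $T = d^{\tilde O(1/\eps^2)}$. The simple algorithmic description quoted in the abstract --- ``randomly select among sub-forecasters that each predict the empirical outcome frequency over recent time windows'' --- should emerge by unpacking the recursive tree structure of these swap regret algorithms: each internal node corresponds to a time window, each leaf predicts the empirical outcome frequency on its window, and on each day the forecaster samples a root-to-leaf path and outputs the leaf's prediction.

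The main obstacle I anticipate is bridging the gap between the swap regret framework, which analyses deterministic distributions over actions with scalar linear losses, and the fundamentally randomized calibration problem, which scores the forecaster on a \emph{single} sampled prediction per day against a \emph{vector-valued} quantity (the $\ell_1$ deviation on the simplex). Concretely one must (i) argue via a martingale concentration argument that the actually-sampled predictions $p_t$ inherit the low swap regret guarantee of the underlying distributional algorithm up to an additive $\tilde O(\sqrt{T})$ error, and (ii) handle the vector-valued, $\ell_1$-aggregated loss either through a coordinate-wise union argument or through a vector extension of the black-box while preserving the $\log N$ scaling. A secondary delicate point is that the $\eps$-net discretization must be compatible with the internal regret comparator, so one should verify that restricting predictions to net points does not inflate the target swap regret by more than $\eps T$, which follows from the Lipschitz structure of the reduction's loss $\ell_t$ in $p$.
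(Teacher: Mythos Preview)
Your high-level plan---reduce calibration to swap regret over an $\eps$-net and invoke the $(\log N)^{O(1/\delta)}$-round algorithms of \cite{peng2023fast,dagan2024external}---is the paper's own starting point (Section~1.3.2), but as written it does not close, and the paper explicitly names both obstructions.

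First, the reduction is lossy in a way that destroys the polynomial bound. The calibration error $\sum_p n_p\|p-\bar X_p\|_1$ is not itself a swap regret; the Foster--Vohra route goes through a \emph{scalar} surrogate. With Brier score $\|p-X_t\|_2^2$ the swap regret is $\sum_p n_p\|p-\bar X_p\|_2^2$, and converting back to $\ell_1$ via Cauchy--Schwarz costs a factor $\sqrt d$, so $\eps$-calibration needs swap regret $\delta=\eps^2/d$. Plugging $\delta=\eps^2/d$ and $\log N=\Theta(d\log(1/\eps))$ into $(\log N)^{O(1/\delta)}$ yields $T\approx d^{d/\eps^2}$, which is \emph{worse} than the classical $(1/\eps)^{O(d)}$. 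Your item (ii) (``coordinate-wise union'' or ``vector extension'') does not evade this: any black-box treatment of the $d$ coordinates pays for $d$ in the exponent. The paper's fix is to replace Brier by the cross-entropy loss $\langle X_t,\log(1/p_t)\rangle$, so that the $\ell_1$--surrogate conversion goes through Pinsker and loses only $\sqrt{\log d}$; a separate smoothness/lazy-update argument is then needed to remove the residual $\log d$ from the exponent and reach $d^{\tilde O(1/\eps^2)}$.

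Second, your martingale step (i) fails for a related scale reason. The algorithms of \cite{peng2023fast,dagan2024external} commit at each round to a \emph{distribution} over actions, and this is essential to their guarantee. If you sample a single $p_t$ from that distribution, the gap between sampled and distributional calibration error is a sum indexed by all $p$ in the support, which for a black-box instantiation over $\mathcal N_\eps$ has exponentially many terms; the additive error is not $\tilde O(\sqrt T)$ but can scale with $|\mathcal N_\eps|$ (the paper remarks explicitly that ``random sampling would not work here, it introduces huge variance when $T=2^{o(d)}$''). The resolution is again non-generic: for the cross-entropy loss the no-external-regret subroutine inside the tree can be replaced by one that outputs a \emph{single} point---the empirical outcome frequency on the current window---so only $L=O(\log(d)/\eps^2)$ distinct predictions are live in any time block and the concentration argument goes through.
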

Theorem~\ref{thm:calibration} establishes the first high dimensional forecasting strategy that obtains non-trivial calibration guarantee after {\em polynomial} number of days. 
It works against adaptive adversary and has only $d\log(1/\eps)$ computation cost per day.

Besides its theoretical efficiency, the algorithm is surprisingly simple and interpretable. 
Unlike previous work, which either use a computational inefficient minimax argument \cite{hart2022calibrated,dagan2024breaking}, or apply no-swap regret learning over an exponentially large $\eps$-net \cite{foster1997calibrated}; our forecaster randomly selects from a collection of $\log(d)/\varepsilon^2$ sub-forecasters, where each sub-forecaster simply outputs the the empirical outcome frequency over recent time window. 
We prove this simple strategy obtains vanishing calibration error in polynomial iterations!

While our algorithm achieves polynomial dependence on the dimension $d$, it incurs an exponential dependence on $1/\varepsilon$. To understand this limitation, we complement our algorithm with a lower bound

\begin{restatable}[]{theorem}{calibrationLower}
\label{thm:lower}
For any $\eps \in (2^{-d^{1/3}}, 1)$, no algorithm can guarantee $\eps$-calibration in fewer than $T = d^{\tilde{O}(\log(1/\eps))}$ rounds.
\end{restatable}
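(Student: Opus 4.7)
The plan is to construct a randomized adversary whose outcome sequence forces any forecaster to incur calibration error exceeding $\eps T$ unless $T \ge d^{\tilde{\Omega}(\log(1/\eps))}$. My approach is a multi-phase amplification: a base hardness gadget on $[d]$ is chained across $k = \Theta(\log(1/\eps))$ phases so that each phase contributes a multiplicative $d^{\Omega(1)}$ factor to the required horizon.

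First, I would design a single-phase gadget on outcome space $[d]$ in which the adversary samples a hidden random structure (for concreteness, either a random subset of $[d]$, a random direction in $\{\pm 1\}^d$, or a random codeword from a suitable error-correcting family) and plays outcomes correlated with it. The gadget is designed so that any forecaster must either (i) spend $d^{\Omega(1)}$ rounds identifying enough of the hidden structure to be calibrated, or (ii) pay $\Omega(1)$ calibration error per round in the phase; the key leverage is that the hidden structure has entropy $\Omega(d)$ while each round reveals only $O(\log d)$ bits, yet calibration essentially requires the forecaster's prediction to match the induced outcome distribution.

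Next, I would chain $k = \Theta(\log(1/\eps))$ copies of this gadget via a sequential/tree construction: outcomes are identified with root-to-leaf paths of a depth-$k$ tree, and a fresh gadget is activated at each level conditioned on the forecaster having internalized the previous levels' structure (e.g., the phase-$(i+1)$ hidden structure only ``activates'' on rounds where $p_t$ has already committed to the correct phase-$i$ sub-simplex). Because each phase multiplicatively demands $d^{\Omega(1)}$ rounds within its live round set, telescoping over the $k$ phases yields $T \ge d^{\tilde{\Omega}(k)} = d^{\tilde{\Omega}(\log(1/\eps))}$. The assumption $\eps \ge 2^{-d^{1/3}}$ enters to keep union bounds across the $k$ phases (and the polynomially many conditioning events within each phase) tractable.

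The main obstacle is accounting for calibration error across phases. Since the calibration error is $\sum_p \|\sum_t (p - X_t)\mathsf{1}[p_t=p]\|_1$, a clever forecaster could try to defeat the construction by scattering its predictions across exponentially many distinct values so that each bucket has few rounds and each inner $\ell_1$ residual is individually small. I would address this with two ingredients: first, a quantization/covering argument bounding the number of ``effective'' distinct predictions the forecaster can use without already blowing up the error; and second, a per-phase potential function lower-bounding phase $i$'s contribution to the aggregate calibration error by $\Omega(\eps/k)$ per round, so that summing over phases recovers the target $\Omega(\eps T)$. Making these two ingredients compatible across the recursion is where I expect most of the technical work to concentrate.
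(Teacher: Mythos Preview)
Your high-level recursive schema (chain $k=\Theta(\log(1/\eps))$ phases, each demanding $d^{\Omega(1)}$ rounds) matches the paper's, but the content of each phase and the cross-phase accounting are where the proposal has real gaps.

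\textbf{The single-phase gadget does not work as stated.} Your reasoning is that the hidden structure has entropy $\Omega(d)$ while each round reveals $O(\log d)$ bits, so the forecaster needs $d^{\Omega(1)}$ rounds to ``identify'' it, and meanwhile ``calibration essentially requires the forecaster's prediction to match the induced outcome distribution.'' That last claim is false. Calibration is a consistency property between predictions and \emph{conditional} empirical outcomes, not an accuracy requirement. A forecaster who never predicts the true distribution can still be perfectly calibrated (e.g., always predicting a fixed $p$ and being lucky that the empirical outcome averages to $p$). So an information-theoretic learning lower bound does not translate to a calibration lower bound, and your phase gadget as described does not force $\Omega(1)$ error per round.

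\textbf{The tree chaining changes the problem.} Identifying outcomes with root-to-leaf paths of a depth-$k$ tree inflates the outcome space to $d^k$, which is not the regime of the theorem. The paper instead \emph{partitions} the fixed outcome space $[d]$ into $R=\tilde{\Theta}(\log(1/\eps))$ blocks $D_1,\dots,D_R$, one per recursion level, and uses time-nesting (not outcome-product) for the recursion: level $r$ fixes a single random index inside $D_r$ for the current interval while deeper levels vary the distribution over $D_{r+1},\dots,D_R$ on sub-intervals.

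\textbf{The cross-phase accounting is the heart of the proof, and quantization is not the fix.} You correctly flag the scattering/hedging issue, but the paper does not resolve it by a covering argument. Instead, the key mechanism is that within level-$r$'s interval the forecaster either (a) puts nontrivial mass on outcomes in $D_r$ whose index has not yet been revealed, which incurs $\Omega(1/R)$ error on $D_r$ because that index is random and usually inactive; or (b) concentrates mass on the already-revealed index, which forces its predictions in different sub-intervals to be essentially disjoint, so the level-$(r+1)$ error does not cancel across sub-intervals. The inductive lemma lower-bounds $\DCE$ restricted to a given interval, prediction set, and outcome block $D_{\geq r}$ by $\eps_r$ times the total prediction mass there, with $\eps_r$ shrinking by a $\mathrm{poly}(R)$ factor per level. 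The delicate steps are classifying predictions by how much weight they put on future blocks $D_{r,>k}$ (the sets $P_k(\alpha)$, $P_{\mathrm{small}}$, $P_{\mathrm{smooth}}$) and showing each ``bad'' class already contributes enough error. None of this is an entropy or covering argument; it is a careful case analysis on where the prediction mass sits relative to the block structure.
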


We note the lower bound in Theorem \ref{thm:lower} does not match our algorithm in Theorem \ref{thm:calibration}, closing this gap is left as an open question. Nevertheless, the lower bound has several important implications. 
First, it implies that a polynomial number of iterations (i.e., $d^{\Omega(1)}$) are required even for constant $\eps > 0$; if one wants to go further and set $\eps = 1/\poly(d)$, then one needs super-polynomial number of iterations. 
It also shows that no algorithm could guarantee $\poly(d)\cdot T^{1-\delta}$ calibration error for some absolute constant $\delta > 0$ that is independent of dimension $d$, and therefore, establishes a separation between binary prediction ($d=2$) and high dimensional prediction.

Together, Theorems~\ref{thm:calibration} and \ref{thm:lower} imply that for constant $\eps > 0$, $\poly(d)$ rounds are both necessary and sufficient to achieve $\eps$-calibration; for high accuracy $\eps =1/\poly(d)$, super-polynomial iterations are necessary. 
These results (partially) resolve the long-standing open questions of \cite{abernethy2011does, hazan2012weak}, and open new avenues for practical and theoretical advances in high-dimensional calibration.

\subsection{Related work}
\paragraph{Online calibration}
There is a long line of work on online (sequential) calibration \cite{dawid1982well,foster1997calibrated,foster1998asymptotic,qiao2021stronger,dagan2024breaking,hart2022calibrated,foster1999proof, fudenberg1999easier,kakade2008deterministic, mannor2007online, mannor2010geometric,abernethy2011does,hazan2012weak,foster2018smooth,luo2024optimal, noarov2023high,kleinberg2023u,garg2024oracle,qiao2024distance,arunachaleswaran2025elementary}.
\cite{foster1998asymptotic} give the first calibrated forecasting algorithm over binary outcome, using Brier score and no swap regret learning. 
The same approach was later extended to multi-class calibration \cite{foster1997calibrated}, but requires $(1/\eps)^{\Omega(d)}$ iterations to be $\eps$-calibrated.
There are several alternative approaches for calibrated forecasting \cite{hart2022calibrated,foster1999proof, mannor2007online,mannor2010geometric} using minimax argument \cite{hart2022calibrated} or Blackwell's approachability \cite{foster1999proof}. 
While these classical work give a variety of asymptotically calibrated algorithms, the (optimal) statistical efficiency remains unclear.

For binary outcomes, it has long been known that the optimal total calibration error lies within the range $[T^{1/2}, T^{2/3}]$, with improvements made only recently. 
On the lower bound side, \cite{qiao2021stronger} prove a lower bound of $\Omega(T^{0.528})$ total calibration error when the forecaster faces an adaptive adversary; \cite{dagan2024breaking} strengthens the lower bound to $\Omega(T^{0.543})$ and it holds even against an oblivious adversary.
On the algorithmic side, the recent breakthrough \cite{dagan2024breaking} give the first algorithm with $O(T^{2/3-\delta})$ total calibration error for some constant $\delta > 0$. 

Despite the recent progress on binary calibration, high dimensional calibration remains challenging. The best known algorithms \cite{foster1997calibrated,hart2022calibrated, fishelson2025full} take $(1/\eps)^{\Omega(d)}$ iterations.
\cite{abernethy2011does} pose a COLT open question on the computational efficiency of high dimensional online calibration. 
\cite{hazan2012weak} prove a computational lower bound for deterministic weak calibration forecaster, in particular, assuming $\mathsf{PPAD} \subseteq \mathsf{RP}$, there is no polynomial time deterministic calibrated forecaster that could be $\eps = 1/d^3$-calibrated.
\cite{hazan2012weak} pose the statistical complexity of (randomized) online calibration as an open question in the discussion section.


\paragraph{The benefits of calibration} Beyond being a desirable property in its own right, calibration has proven valuable for downstream decision-making tasks, including swap regret minimization \cite{kleinberg2023u, roth2024forecasting, hu2024predict}, equilibrium computation in games \cite{foster1997calibrated, haghtalab2023calibrated}, and fairness considerations \cite{pleiss2017fairness, hebert2018multicalibration}.

\paragraph{Calibration in other setting} 
There is a vast body of literature on calibration across various areas, including fairness \cite{pleiss2017fairness, hebert2018multicalibration}, machine learning \cite{guo2017calibration, braverman2020calibration, minderer2021revisiting, kalai2024calibrated} and medical care \cite{jiang2012calibrating, crowson2016assessing}, see the reference therein.

\subsection{Technique overview}
We give a high level overview on the technical approach of Theorem \ref{thm:calibration} and Theorem \ref{thm:lower}. 
Section \ref{sec:tech-algo} presents the calibrated forecasting algorithm and its analysis;
Section \ref{sec:tech-intuition} explains the intuition behind the algorithm and how we build upon the previous work of \cite{foster1997calibrated,peng2023fast,dagan2024external}.
Section \ref{sec:tech-lower} discusses the ideas for lower bounds.

\subsubsection{The forecasting algorithm and its analysis}
\label{sec:tech-algo}
Our forecasting algorithm maintains multiple sub-forecasters, each operating at different scales of granularity. Specifically, the $\ell$-th sub-forecaster partitions the prediction sequence into $H^{\ell-1}$ ($H = 1/\eps^4$) intervals, each of length $T/H^{\ell-1}$. Within each interval, it uses the empirical outcome frequency as the prediction and updates every $T/H^{\ell}$ days (so it updates $H$ times within each interval).
On each day, the final forecast is sampled uniformly at random from these $L = \log(d)\cdot \eps^{-2}$ sub-forecasters.

At a first glance, using empirical outcome frequency as a prediction might be a bad idea. For example, consider the first forecaster, it only has one interval and it updates every $T/H$ days. Let $X_{1}, \ldots, X_{H}$ be the empirical frequency of days $[1:T/H], \ldots, [(H-1)\cdot(T/H)+1: T]$. The average calibration error of the first forecaster equals 
\[
\frac{1}{H}\sum_{h\in [H]} \Big\|\frac{X_{1}+\cdots +X_{h-1}}{h-1} - X_h\Big\|_1.
\]
This value could be large, for example, if $X_1,\ldots, X_h$ spread out like $X_1 = (1,0, \ldots, 0), X_{2} = (0,1,0, \ldots, 0), \ldots$, then using empirical frequency seems to be a bad idea, as the outcome at the $h$-th step could be very different from the historical outcome. 
Nevertheless, {\em our crucial observation is that, whenever this happens, the average entropy of $\Ent(X_1),\ldots, \Ent(X_{H})$ must be smaller than the entropy $\Ent(\frac{X_1+\cdots +X_{H}}{H})$ by a non-trivial amount.} 
This motivates one to further divide $X_1, \ldots, X_{H}$ into finer granularity as the entropy can not drop forever.
Overall, we prove that by averaging across different sub-forecasters, the average entropy drop scales like $\log(d)/L$ and the average calibration error is $\sqrt{\log(d)/L}$.

\subsubsection{The intuition behind the algorithm} 
\label{sec:tech-intuition}
The purpose of this section is to explain the origin of our new forecasting algorithm. The algorithm and its analysis have already been sketched in Section \ref{sec:tech-algo}, so readers could skip this section if they are not interested in how we design the forecasting algorithm and how it relates with existing literature.
As we shall explain, our algorithm combines the classic approach of \cite{foster1998asymptotic} with the recently developed faster no-swap regret learning algorithm \cite{peng2023fast,dagan2024external}, in addition with a few new ideas.

We first review the approach of \cite{foster1998asymptotic}. In a nutshell, \cite{foster1998asymptotic} first reduce online calibration to swap regret minimization, then use a swap regret minimization algorithm \cite{foster1998asymptotic, blum2007external}.
In more details, one unique challenge for online calibration is that the calibration error is not additively separable: one can not directly attribute the $\ell_1$ error to each individual day $t\in [T]$.
To get around with it, \cite{foster1998asymptotic} consider a surrogate loss, a.k.a. the Brier score $\|p_t - X_t\|_2^2$. 
The Brier score is additive and \cite{foster1998asymptotic} prove that one can reduce online calibration to swap regret minimization on Brier score.
In particular, if the algorithm has at most $\delta$-swap regret, then it is $\eps = \sqrt{d\delta}$-calibrated.
The swap regret is minimized w.r.t. the $\eps$-net of $\Delta_d$ (denoted as $\mathcal{N}_{\eps}$), and a classical result of \cite{foster1998asymptotic,blum2007external} shows that one can obtain $\delta$-swap regret in $T = |\mathcal{N}_{\eps}|/\delta^2 \approx (1/\eps)^{\Theta(d)}$ days, this is where the exponential dependence on $d$ comes from.


The no-swap regret learning algorithm of \cite{blum2007external} has long been considered optimal.
Nevertheless, the recent work of \cite{peng2023fast,dagan2024external} give an alternative algorithm, which obtain $\delta$-swap regret after $(\log(|\mathcal{N}_{\eps}|))^{1/\delta}$ days. 
With this new algorithm, we can hope to improve the classic approach of \cite{foster1998asymptotic}. 
Nevertheless, there are several challenges to overcome, which are the new technical contribution of this paper.
\begin{itemize}
    \item {\bf Better reduction via cross entropy.} The first issue comes from the error blowup in the calibration-to-swap-regret reduction. To obtain $\eps$-calibration, one needs the swap regret to be $\delta = \eps^2/d$. 
    This extra factor of $d$ is critical, as the swap regret algorithm of \cite{peng2023fast,dagan2024external} now requires $T = (\log(|\mathcal{N}_{\eps}|))^{1/\delta} \approx d^{d/\eps^2}$, which is actually worse than the original approach \cite{foster1998asymptotic}. To this end, we give a more efficient reduction and use the cross entropy loss in replace of the Brier score, i.e., the surrogate loss we minimize is $\langle X_t, \log(1/p_t)\rangle$ instead of $\|X_t - p_t\|_2^2$.
    The cross entropy loss gives a better reduction, to obtain $\eps$-calibration, one only needs to obtain $\delta = \eps^2/\log(d)$-swap regret.
    \item {\bf ``Purify'' prediction via a new no-external regret algorithm.} Using the cross entropy loss, one could hope to obtain a quasi-polynomial forecaster with $T = (\log(|\mathcal{N}_{\eps}|))^{1/\delta} \approx d^{\log(d)/\eps^2}$. However, there is a very subtle issue: The no-swap regret learning algorithm of \cite{peng2023fast,dagan2024external} randomizes over multiple no-external regret algorithms, where each no-external regret algorithm commits a distribution over $\mathcal{N}_{\eps}$, but not a single prediction from $\mathcal{N}_{\eps}$. 
    This is problematic since (1) the forecaster must make a prediction rather than commit a distribution over $\mathcal{N}_{\eps}$;\footnote{Random sampling would not work here, it introduces huge variance when $T = 2^{o(d)}$} (2) for a generic no-swap regret algorithm, it must commit a distribution (rather than a single action) and this is very critical for the new algorithm of \cite{peng2023fast,dagan2024external} (see their paper for discussion). 
    To this end, we observe that our regret minimization task has certain nice structural property, one can design a new external regret algorithm that commits a single prediction from $\mathcal{N}_{\eps}$ instead of a distribution over $\mathcal{N}_{\eps}$. In fact, this prediction is the empirical frequency of the outcome! We prove this prediction has no-external regret property, by the reduction in \cite{peng2023fast,dagan2024external}, this would translate to no-swap regret property.
    \item {\bf Parameter optimization via smoothness.} We finally remove the $\log(d)$ factor from the exponent and design a polynomial time forecaster. 
    The observation is that the new external regret algorithm is very smooth, its prediction within any $\eps d$ iterations are $\eps$-close in $\ell_1$ distance. One can use a lazy update strategy and further improve the no-swap regret algorithm from $T = (\log(|\mathcal{N}_{\eps}|))^{1/\delta} = d^{\log(d)/\eps^2}$ to  $T = (1/\eps^4)^{1/\delta} = d^{1/\eps^2}$ (notably this means that our no-external regret algorithm does not even have a logarithmic dependence on $|\mathcal{N}_{\eps}|$).
\end{itemize}

\subsubsection{Lower bound}
\label{sec:tech-lower}
We next sketch our lower bound construction.
Our lower bound is also inspired by the swap regret learning lower bound of \cite{peng2023fast,dagan2024external} and has a recursive structure. However, online calibration also differs from swap regret minimization as one can not arbitrarily penalize a set of ``wrong decisions''. 
Our high level idea is to enforce the forecaster to predict distinctively across different days -- if the set of predictions (conditional events) are huge, then the calibration error must also be large. Nevertheless, this is not easy because hedging strategy exists (as shown by our forecasting algorithm).

Our lower bound works even in the setting where the forecaster knows the outcome distribution $q_t \in \Delta_d$ (but not the actual outcome $X_t \sim q_t, X_t \in [d]$) at the beginning of day $t$. 
The lower bound has a recursively structure. Let $R = \log(1/\eps)$ be the total number of recursions and we partition the outcome space into $R$ blocks $[d] = D_1\cup \cdots \cup D_{R}$. 
We first divide the sequence $[T]$ into $K \ll R$ intervals $T_1, \ldots, T_{K}$, each of length $T/K$.
For each $k\in [K]$, let $i_k$ be selected uniformly at random from $D_1$. 
During time interval $T_k$, the outcome distribution $q_t$ ($t \in T_k$) keeps the same for $D_1$ while varies for $D_2$.
In particular, the distribution $q_t$ always has $1/R$ probability mass on $i_k$ and $0$ probability mass on $D_{1}\setminus \{i_k\}$,
The distribution over the rest outcome $D_2,\ldots, D_{R}$ varies during $T_{k}$ and they are constructed recursively in the same way.

For the analysis, if the forecaster is truthful, in the sense that it always truthfully predicts the distribution over $D_1$ during $T_{1}, \ldots, T_{K}$ (it could be non-truthful over other outcome $D_2\cup\cdots D_{k}$), then the prediction set is disjoint for each $k\in [K]$ and the calibration error is additive.
However, the forecaster needs not to be truthful over $D_1$, for example, during time interval $T_{k}$, it could deliberately make some repeated prediction $p_t$ that have been made in previous time interval $T_{k'}$ ($k'<k$), with the hope of balancing the empirical outcome and reducing the calibration error.
The critical observation is that, $i_{k}$ is chosen randomly and not known to the forecaster at $T_{k'}$, it is very unlikely that the forecaster could guess the appearance of $i_k$ (recall that $K\ll R$), and therefore, the time that the forecaster deliberately balances the empirical distribution over $D_{2}\cup \cdots \cup D_{R}$, it must incur roughly $1/R$ error on $D_1$. 
Consequently, the average error decays by at most  a constant factor ($1/R$) for each recursion, there are $R$ recursion so the final average error is $R^{-R}\approx \eps$ and the total number of days are $K^{R} \approx d^{\log(1/\eps)}$.

\section{Preliminary}

\paragraph{Notation} Throughout the paper, we write $[n] = \{1,2, \ldots, n\}$, and $[n_1:n_2] = \{n_1, n_1+1, \ldots, n_2\}$. For any set $S$, we use $\Delta(S)$ to denote all probability distributions over $S$, and for any integer $d$, we write $\Delta_d = \Delta([d])$ for simplicity. We write $\|x\|_1 = \sum_{i}|x_i|$ to denote the $\ell_1$ norm of a vector $x$. 
For a random variable $X$, we write $\Ent(X)$ to denote the entropy of $X$ in natural log base, i.e., $\Ent(X) = \sum_{x}\Pr[X= x]\log(1/\Pr(X=x))$. We use $\KL(X||Y)$ to denote the KL divergence between $X$ and $Y$.

\paragraph{Calibrated forecasting} In the task of online forecasting, there is a set of outcome $[d]$ and a forecaster makes prediction over a sequence of $T$ days.
It is common to discretize the prediction space and consider the prediction coming from a finite set $\mK \subseteq \Delta_d$.
At each day $t \in [T]$, the forecaster first makes a prediction $p_t \in \mK\subseteq \Delta_d$ over the outcome space $[d]$, then the Nature reveals the outcome $X_t \in [d]$. 
It is well-known that, in order to achieve non-trivial calibration guarantee, the prediction of a forecaster must be randomized \cite{oakes1985self,foster1998asymptotic}. 
Hence, at each day, the prediction $p_t$ is drawn from some distribution $\mu_t \in \Delta(\mK)$ over the prediction space $\mK$.

The calibration measures the conditional accuracy of the online forecaster. The most commonly used metric is the $\ell_1$-calibration.
\begin{definition}[$\ell_1$ calibration]
Given a forecaster with prediction drawn from $\mu_1, \ldots, \mu_{T} \in \Delta(K)$ and outcome $X_1, \ldots, X_T$, the expected calibration error (ECE) is defined as 
\begin{align}
\mathsf{ECE}_{\mu, X}:= \E_{\{p_t \sim \mu_t\}_{t\in [T]}}\left[\sum_{p \in \mK} \Big\| \sum_{t=1}^{T} (p_t - X_t) \cdot \mathsf{1}[p_t = p]\Big\|_1\right] \label{eq:ece}
\end{align}
Here we view $X_t$ as a dimension $d$ one-hot vector with the $X_t$-th coordinate equals $1$.
\end{definition}

Since the prediction $p_t$ is drawn from the distribution $\mu_t$, one could also define the calibration error with respect to the distribution -- this is called the distributional calibration in \cite{foster1998asymptotic}.
\begin{definition}[$\ell_1$ distributional calibration]
Given a calibrated forecaster with prediction distribution $\mu_1, \ldots, \mu_{T} \in \Delta(K)$ and outcome $X_1, \ldots, X_T$, the distributional calibration error (DCE) is defined as 
\begin{align}
\mathsf{DCE}_{\mu, X}:= \sum_{p \in \mK} \Big\| \sum_{t=1}^{T}  (p - X_t) \cdot \mu_t(p)\Big\|_1 \label{eq:dce}
\end{align}
\end{definition}

We say a forecasting algorithm is $\eps$-calibrated (resp. $\eps$-distributional calibrated), if its expected calibration error (resp. distributional calibration error) is at most $\eps T$.
We note that an $\eps$-calibrated forecaster implies an $\eps$-distributional calibrated forecaster, while the other direction does not necessarily hold.



\paragraph{Adversary model} 
In online forecasting, an adaptive adversary refers the Nature could adaptively choose the outcome $X_t$ based on the past prediction $p_1,\ldots, p_{t-1}$. An oblivious adversary refers the Nature would (randomly) choose the outcome sequence $X_1,\ldots, X_{T}$ at the beginning of day $1$. Our algorithm works for adaptive adversary while the lower bound holds against oblivious adversary.




\section{Calibrated forecasting in polynomial rounds}

\calibrationAlgo*


\paragraph{Algorithm description}
Our approach is depicted in Algorithm \ref{algo:forecaster}. At each day $t$, the prediction is obtained by randomly sampling from $L = \log(n)/\eps^2$ sub-forecasters (Line \ref{line:sample}). For each $\textsc{Forecaster}(\ell)$ ($\ell\in [L]$), it divides the entire sequence into $H^{\ell-1}$ intervals of equal size. Roughly speaking, within each interval, $\textsc{Forecaster}(\ell)$ uses the the empirical frequency of the outcome as the prediction.
More precisely, at interval $h_{<\ell} = (h_1, \ldots, h_{\ell-1}) \in [H]^{\ell-1}$, $\textsc{Forecaster}(\ell)$ operates in $H$ iterations, where each iteration contains $T_{\ell} = T/H^{\ell}$ consecutive days. 
It starts with an uniform distribution $\vec{\mathsf{1}}_{d} = \frac{1}{d}(1,1, \ldots, 1)$, and for every $T_{\ell}$ days, it computes the empirical frequency of the outcome within the $h_{<\ell}$-th interval up to this point (Eq.~\eqref{eq:empirical-outcome}), and then predicts it for the next $T_{\ell}$ days (Line \ref{line:predict}).

\begin{algorithm}[!hbtp]
\caption{Calibrated forecaster}
\label{algo:forecaster}
\begin{algorithmic}[1]
\State {\bf Parameters} $L = \log(n)/\eps^2, H = 1/\eps^4, T = (d^3/\eps^6)\cdot H^{L}$
\State {\bf Parameters} $T_{\ell} = (d^3/\eps^6) \cdot H^{L - \ell}$ ($\ell \in [L]$), 
\State {\bf Parameters} $\Gamma_{h_{1}, \ldots, h_{\ell}} := [\sum_{r =1}^{\ell}(h_{r}-1)T_{r}+1: \sum_{r =1}^{\ell}(h_{r}-1)T_{r}+ T_{\ell}]$\\
\For{$t =1,2,\ldots, T$}
\State Random sample $\ell \in [L]$ and make the prediction $p_t^{(\ell)} \in \Delta_{d}$ from $\textsc{Forecaster}(\ell)$\label{line:sample}
\EndFor\\
\Procedure{\textsc{Forecaster}}{$\ell$} \Comment{$\ell \in [L]$}
\For{$h_{<\ell} = 1,2, \ldots, H^{\ell-1}$} 
\For{$h_{\ell} = 1,2,\ldots, H$}
\State Compute the average outcome  \Comment{$\vec{\mathsf{1}}_{d} =\unif([d])$} 
\begin{align}
Y^{(\ell)}_{h_{<\ell}, h_{\ell}} = \frac{1}{(h_{\ell}-1 + 1/\eps)T_{\ell}}\left(\sum_{h < h_{\ell}}\sum_{t \in \Gamma_{h_{<\ell}, h}}X_t + (T_{\ell}/\eps) \cdot \vec{\mathsf{1}}_{d} \right)  \label{eq:empirical-outcome}
\end{align}
\State Predict $p^{(\ell)}_{t} = Y^{(\ell)}_{h_{<\ell}, h_{\ell}}$ for the next $T_{\ell}$ days \Comment{$p^{(\ell)}_{t} = Y^{(\ell)}_{h_{<\ell}, h_{\ell}}$ for $t \in \Gamma_{h_{<\ell},h_{\ell}}$} \label{line:predict}
\EndFor
\EndFor
\EndProcedure
\end{algorithmic}
\end{algorithm}

\paragraph{Notation}
We use the notation $h_{<\ell} = (h_{1}, \ldots, h_{\ell-1})$ and $h_{\leq \ell} = (h_{1}, \ldots, h_{\ell})$ interchangeably.
For any $h_{\leq \ell}\in [H]^{\ell}$, we write $\Gamma_{h_{\leq\ell}} := [\sum_{r =1}^{\ell}(h_{r}-1)T_{r}+1: \sum_{r =1}^{\ell}(h_{r}-1)T_{r}+ T_{\ell}]$. We note $\Gamma_{h_{\leq \ell}}$ is the $h_{\leq \ell} = (h_{1}, \ldots, h_{\ell})$-th interval of $\textsc{Forecaster}(\ell+1)$, and it is also the $h_{\ell}$-th iteration in the $h_{<\ell}$-th interval of $\textsc{Forecaster}(\ell)$. Let $X_{h_{\leq\ell}} \in \Delta_{d}$ be the average outcome in $\Gamma_{h_{\leq\ell}}$, i.e., 
$X_{h_{\leq \ell}} = \frac{1}{T_{\ell}} \sum_{t\in \Gamma_{h_{\leq\ell}}}X_{t}$.
For each $\ell \in [L]$, we note that $p_{t}^{(\ell)}$ remains the same for $t\in \Gamma_{h_{\ell}}$, hence we can write it as $p_{h_{\leq \ell}}^{(\ell)}$.

\paragraph{Distributional calibration guarantee}
The key step is to derive the distributional calibration guarantee of Algorithm \ref{algo:forecaster}. 
Let $q_{t}$ be the distribution of the prediction at the $t$-th day, i.e., 
\begin{align}
q_t(p) = \frac{1}{L}\sum_{\ell=1}^{L}\mathsf{1}[p_{t}^{(\ell)} = p] \quad \quad \forall p \in \Delta_d. \label{eq:q_t}
\end{align}
We prove that $\{q_t\}_{t\in [T]}$ is $4\eps$-distributional calibrated.
\begin{lemma}[Distributional calibration guarantee]
\label{lem:distributional}
Algorithm \ref{algo:forecaster} is an $4\eps$-distributional calibrated forecaster, i.e., 
\begin{align*}
\DCE_{q, X} \leq 4\eps T.
\end{align*}
\end{lemma}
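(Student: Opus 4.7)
The plan is to bound $\DCE_{q,X}$ deterministically in the outcome sequence, by decomposing across the $L$ sub-forecasters and across their blocks, then controlling each piece with a KL-divergence argument organized around an entropy-drop telescoping. Substituting the formula for $q_t$ from \eqref{eq:q_t} and applying the triangle inequality twice---first to split the outer $\ell_1$-norm over $\ell$, and second to split over the blocks $\Gamma_{h_{\le\ell}}$ on which $p_t^{(\ell)}$ is constant---yields
\begin{align*}
\DCE_{q,X} \;\le\; \frac{1}{L}\sum_{\ell=1}^{L} T_\ell\sum_{h_{\le\ell}\in[H]^\ell}\bigl\|Y_{h_{<\ell},h_\ell}^{(\ell)}-X_{h_{\le\ell}}\bigr\|_1.
\end{align*}
Applying Pinsker's inequality $\|Y-X\|_1\le\sqrt{2\KL(X||Y)}$ to each block, followed by Cauchy--Schwarz twice (once over $h_\ell$, once over $h_{<\ell}$), together with the identity $T_\ell H^\ell = T$, the contribution of the $\ell$-th sub-forecaster simplifies to $T\sqrt{(2/H^\ell)\sum_{h_{\le\ell}}\KL(X_{h_{\le\ell}}||Y_{h_{<\ell},h_\ell}^{(\ell)})}$.

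The crux of the proof is the following per-parent-interval identity. Writing $\bar X := X_{h_{<\ell}} = \tfrac{1}{H}\sum_{h_\ell}X_{h_{\le\ell}}$ and expanding $\KL(X||Y)=\langle X,\log(1/Y)\rangle-\Ent(X)$ shows that
\begin{align*}
\sum_{h_\ell=1}^{H}\KL\bigl(X_{h_{\le\ell}}\,\|\,Y_{h_{<\ell},h_\ell}^{(\ell)}\bigr) \;=\; H\cdot\Delta_{h_{<\ell}} + R_{h_{<\ell}},
\end{align*}
where $\Delta_{h_{<\ell}} := \Ent(\bar X)-\tfrac{1}{H}\sum_{h_\ell}\Ent(X_{h_{\le\ell}})\ge 0$ is the local entropy drop (nonnegative by concavity of $\Ent$), and $R_{h_{<\ell}} := \sum_{h_\ell}\langle X_{h_{\le\ell}},\log(\bar X/Y_{h_{<\ell},h_\ell}^{(\ell)})\rangle\ge 0$ is the log-loss regret of the Laplace-smoothed prefix-average forecaster against the best constant predictor $\bar X$ (nonnegative by Gibbs' inequality). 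The two terms aggregate very differently: averaging $\Delta_{h_{<\ell}}$ over $h_{<\ell}$ and summing over $\ell$ telescopes,
\begin{align*}
\sum_{\ell=1}^{L}\tfrac{1}{H^{\ell-1}}\sum_{h_{<\ell}}\Delta_{h_{<\ell}} \;=\; \Ent(X_{\emptyset}) - \tfrac{1}{H^L}\sum_{h_{\le L}}\Ent(X_{h_{\le L}}) \;\le\; \log d,
\end{align*}
so the average entropy drop per level is at most $\log d / L = \epsilon^2$.

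Combining via Jensen's inequality for the concave function $\sqrt{\cdot}$ gives
\begin{align*}
\DCE_{q,X} \;\le\; T\sqrt{\tfrac{2\log d}{L}+\tfrac{2R_\epsilon}{H}},
\end{align*}
where $R_\epsilon$ is a uniform upper bound on $R_{h_{<\ell}}$. The main obstacle is controlling $R_\epsilon$: this reduces to a log-loss regret bound for the Laplace-smoothed empirical forecaster against the best constant predictor. Since $Y_{h_{<\ell},h_\ell}^{(\ell)}$ is the Bayesian posterior mean under a $\mathrm{Dir}(\tfrac{1}{d\epsilon},\ldots,\tfrac{1}{d\epsilon})$ prior over $d$ classes and $H$ rounds, a direct Gamma-function evaluation of $\prod_{h_\ell}Y_{h_{<\ell},h_\ell}^{(\ell)}[Z_{h_\ell}]$ (equivalently, the classical Shtarkov/mixture bound) yields $R_{h_{<\ell}}\le O(\log(H)/\epsilon) = O(\log(1/\epsilon)/\epsilon)$. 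Plugging in $L = \log(d)/\epsilon^2$ and $H = 1/\epsilon^4$, the first term inside the square root is $2\epsilon^2$ while the second is $O(\epsilon^3\log(1/\epsilon))$, so $\DCE_{q,X}\le T\sqrt{2\epsilon^2+o(\epsilon^2)}\le 4\epsilon T$.
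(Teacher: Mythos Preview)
Your overall architecture---decomposing over sub-forecasters and blocks, passing to KL via Pinsker and Cauchy--Schwarz, then telescoping entropy drops across levels---matches the paper's. But there is a genuine gap at the regret step.

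The claim $R_{h_{<\ell}}\le O(\log(H)/\epsilon)$ is false. The online log-loss regret of the $\mathrm{Dir}(1/(d\epsilon),\ldots,1/(d\epsilon))$ forecaster against the best constant can be $\Theta(H\log(\epsilon d))$ when $d$ is large. Concretely, take $d\ge H$ and let each $X_{h_{\le\ell}}$ be the distinct point mass $e_{h_\ell}$ (achievable: the adversary repeats a single symbol throughout each sub-block). Then $Y^{(\ell)}_{h_{<\ell},h_\ell}(h_\ell)=\frac{1/(\epsilon d)}{h_\ell-1+1/\epsilon}$, so the per-round log-loss is $\log(\epsilon d)+\log(h_\ell-1+1/\epsilon)$; summing gives roughly $H\log(\epsilon d)+H\log H-H$. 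Since $\bar X$ is uniform on $H$ symbols with $\Ent(\bar X)=\log H$, you get $R_{h_{<\ell}}\approx H(\log(\epsilon d)-1)$, hence $2R_\epsilon/H\approx 2\log(\epsilon d)$, which is not $o(\epsilon^2)$ and the argument collapses. The Shtarkov/mixture bound you invoke does not yield $O(\log(H)/\epsilon)$ here; its leading term scales with the number of distinct observed symbols times $\log(1/\alpha)=\log(\epsilon d)$.

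The paper avoids this entirely by first shifting the predictor (its Lemma~\ref{lem:eps-diff}): it replaces $Y^{(\ell)}_{h_{<\ell},h_\ell}$ by $Y^{(\ell)}_{h_{<\ell},h_\ell+1}$, paying only $2\epsilon T$ in $\ell_1$ because consecutive predictions differ by at most $2\epsilon$. Since $Y^{(\ell)}_{h_{<\ell},h_\ell+1}$ already incorporates $X_{h_{\le\ell}}$, the relevant quantity becomes a \emph{be-the-leader} bound rather than a true online regret, and Lemma~\ref{lem:pseudo-regret} then gives $\sum_{h_\ell}\langle X_{h_{\le\ell}},\log(1/Y^{(\ell)}_{h_{<\ell},h_\ell+1})\rangle \le H\cdot\Ent(X_{h_{<\ell}})+\epsilon^2 H$, with no residual dependence on $d$. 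In the adversarial example above, $Y^{(\ell)}_{h_{<\ell},h_\ell+1}(h_\ell)=\frac{1+1/(\epsilon d)}{h_\ell+1/\epsilon}\approx 1/h_\ell$, so the shifted log-loss is $\approx H\log H-H$, matching $H\,\Ent(\bar X)$ as required. This shift is the missing idea in your argument.
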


\begin{proof} Our proof proceeds in a few steps.

{\noindent \bf Step 1.} We first attribute the distributional calibration error to different levels $\ell \in [L]$ and intervals $h_{\leq \ell} \in [H]^{\ell}$
\begin{align}
\DCE_{q, X}= &~ \sum_{p} \Big\| \sum_{t=1}^{T}  (p - X_t) \cdot q_t(p)\Big\|_1 \notag \\
= &~ \frac{1}{L}\sum_{p} \Big\| \sum_{t=1}^{T}  (p - X_t) \cdot \sum_{\ell=1}^{L}\mathsf{1}[p_t^{(\ell)}=p]\Big\|_1 \notag\\
\leq &~ \frac{1}{L}\sum_{p}\sum_{\ell=1}^{L}\sum_{h_{\leq \ell} \in [H]^{\ell}}\Big\| \sum_{t\in \Gamma_{h_{\leq \ell}}}  (p - X_t) \cdot\mathsf{1}[p_t^{(\ell)}=p]\Big\|_1\notag \\
= &~ \frac{1}{L}\sum_{\ell=1}^{L}\sum_{h_{\leq \ell}\in [H]^{\ell}}  T_{\ell}\cdot \| p_{h_{\leq \ell}}^{(\ell)} - X_{h_{\leq \ell}}\|_1.\label{eq:step1}
\end{align}
The first step follows from the definition of DCE (see Eq.~\eqref{eq:dce}), 
the second step follows from the definition of $q_t$ (see Eq.~\eqref{eq:q_t}).
the third step follows from the triangle inequality. The last step follows from $p_{h_{\leq \ell}}^{(\ell)} = p_t^{(\ell)}$ for any $t \in \Gamma_{h_{\leq \ell}}$, and the definition of $X_{h_{\leq \ell}}$.

{\noindent \bf Step 2.}  Instead of bounding the different between $\| p_{h_{\leq \ell}}^{(\ell)} - X_{h_{\leq \ell}}\|_1$, we bound $\| p_{h_{<\ell}, h_{\ell}+1}^{(\ell)} - X_{h_{\leq \ell}}\|_1$ and note that $p_{h_{<\ell}, h_{\ell}}^{(\ell)}$ is close to $p_{h_{<\ell}, h_{\ell}+1}^{(\ell)}$. In particular, we have  
\begin{lemma}
\label{lem:eps-diff}
For any $\ell \in [L]$, $h_{\leq\ell} \in [H]^{\ell}$, we have
\[
\| p^{(\ell)}_{h_{<\ell}, h_{\ell}} - p^{(\ell)}_{h_{<\ell}, h_{\ell}+1}\|_1 \leq 2\eps.
\]
\end{lemma}
The proof of Lemma \ref{lem:eps-diff} is deferred to the end, now by Eq.~\eqref{eq:step1} and Lemma \ref{lem:eps-diff}, we have that
\begin{align}
\DCE_{q, X} \leq &~ \frac{1}{L}\sum_{\ell=1}^{L}\sum_{h_{\leq \ell}\in [H]^{\ell}}  T_{\ell}\cdot \| p_{h_{\leq \ell}}^{(\ell)} - X_{h_{\leq \ell}}\|_1\notag\\
\leq &~ \frac{1}{L}\sum_{\ell=1}^{L}\sum_{h_{\leq \ell} \in [H]^{\ell}}  T_{\ell}\cdot \| p_{h_{<\ell}, h_{\ell}+1}^{(\ell)} - X_{h_{\leq \ell}}\|_1 + 2\eps T.\label{eq:step2}
\end{align}

{\noindent \bf Step 3.} Instead of bounding the summation of $\ell_1$ distance, we bound the summation of KL divergence, i.e., 
\begin{align}
&~\frac{1}{L}\sum_{\ell=1}^{L}\sum_{h_{\leq \ell} \in [H]^{\ell}}  T_{\ell}\cdot \| p_{h_{<\ell} , h_{\ell}+1}^{(\ell)} - X_{h_{\leq \ell}}\|_1\notag\\
= &~ T\cdot \frac{1}{L}\sum_{\ell=1}^{L} H^{-\ell}\sum_{h_{\leq \ell}\in [H]^{\ell}} \| p_{h_{<\ell}, h_{\ell}+1}^{(\ell)} - X_{h_{\leq \ell}}\|_1 \notag \\
\leq &~ T\cdot \sqrt{\frac{1}{L}\sum_{\ell=1}^{L} H^{-\ell}\sum_{h_{\leq \ell} \in [H]^{\ell}} \| p_{h_{<\ell}, h_{\ell}+1}^{(\ell)} - X_{h_{\leq \ell}}\|_1^2} \notag \\
\leq &~ T\cdot \sqrt{\frac{2}{L}\sum_{\ell=1}^{L} H^{-\ell}\sum_{h_{\leq\ell} \in [H]^{\ell}} \KL(X_{h_{\leq \ell}} || p_{h_{<\ell}, h_{\ell}+1}^{(\ell)})}\label{eq:step3}.
\end{align}
The first step follows from $T_{\ell} = T/H^{\ell}$. The second step follows from the Cauchy Schwarz inequality and the third step follows from the Pinsker inequality.

{\noindent \bf Step 4.}
It remains to bound the summation of KL divergence. First, by the definition of KL divergence, we have that 
\begin{align}
&~ \frac{1}{L}\sum_{\ell=1}^{L} H^{-\ell}\sum_{h_{\leq\ell} \in [H]^{\ell}} \KL(X_{h_{\leq\ell}} || p_{h_{<\ell}, h_{\ell}+1}^{(\ell)})\notag\\
= &~ \frac{1}{L}\sum_{\ell=1}^{L} H^{-\ell}\sum_{h_{\leq \ell} \in [H]^{\ell}} \Big(\left\langle X_{h_{\leq \ell}}, \log(1/p_{h_{<\ell}, h_{\ell}+1}^{(\ell)})\right\rangle - \Ent(X_{h_{\leq\ell}})\Big).\label{eq:step4-1}
\end{align}

Our crucial observation is 
\begin{lemma}
\label{lem:pseudo-regret}
For any $h_{<\ell}\in [H]^{\ell-1}$, we have 
\begin{align*}
\sum_{h_{\ell}\in [H]}\langle X_{h_{\leq \ell}}, \log(1/p_{h_{<\ell},h_{\ell}+1}^{(\ell)})\rangle \leq H\cdot  \mathsf{Ent}(X_{h_{<\ell}}) + \eps^2 H.
\end{align*}
\end{lemma}

The proof of Lemma \ref{lem:pseudo-regret} is deferred to the end.
Now, we can telescope the summation, and we have
\begin{align}
&~ \frac{1}{L}\sum_{\ell=1}^{L} H^{-\ell}\sum_{h_{\leq \ell} \in [H]^{\ell}} \Big(\left\langle X_{h_{\leq \ell}}, \log(1/p_{h_{<\ell}, h_{\ell}+1}^{(\ell)})\right\rangle - \Ent(X_{h_{\leq\ell}})\Big)\notag\\
\leq &~  \frac{1}{L}\sum_{\ell=1}^{L} H^{-\ell}\sum_{h_{<\ell} \in [H]^{\ell-1}}  \Big(H\cdot \Ent(X_{h_{<\ell}}) +\eps^2 H  - \sum_{h_{\ell}\in [H]}\Ent(X_{h_{\leq \ell}})\Big)\notag\\
= &~ \frac{1}{L} \Ent(X) - \frac{1}{L} H^{-L}\sum_{h_{\leq L}\in [H]^{L}}\Ent(X_{h_{\leq L}}) + \eps^2\notag \\
\leq &~ \frac{\log(d)}{L} + \eps^2 = 2\eps^2 \label{eq:step4-2}.
\end{align}
The first step follows from Lemma \ref{lem:pseudo-regret}. The second step takes the telescoping sum, we slightly abuse of notation and write $X = \frac{1}{H}\sum_{h_1\in [H]}X_{h_1}$. The third step holds since $0 \leq \Ent(Y) \leq \log(d)$ for any random variable over $[d]$ and the last step holds since we take $L =\log(d)/\eps^2$.

Combining Eq.~\eqref{eq:step4-1}\eqref{eq:step4-2}, we have
\begin{align}
\frac{1}{L}\sum_{\ell=1}^{L} H^{-\ell}\sum_{h_{\leq\ell} \in [H]^{\ell}} \KL(X_{h_{\leq\ell}} || p_{h_{<\ell}, h_{\ell}+1}^{(\ell)}) \leq 2\eps^2. \label{eq:step4}
\end{align}

Finally, combining Eq.~\eqref{eq:step2}\eqref{eq:step3}\eqref{eq:step4}, we have
\begin{align*}
\DCE_{q, X} \leq 2\eps T + T\cdot \sqrt{2 \cdot 2\eps^2} = 4\eps T.
\end{align*}

\end{proof}

It remains to complete the proof of Lemma \ref{lem:eps-diff} and Lemma \ref{lem:pseudo-regret}.

\begin{proof}[Proof of Lemma \ref{lem:eps-diff}]
For any $\ell \in [L]$, $h_{\leq\ell}\in [H]^{\ell}$, by definition (i.e., Eq.~\eqref{eq:empirical-outcome}), one has 
\begin{align*}
p^{(\ell)}_{h_{<\ell}, h_{\ell}} = \frac{\sum_{h < h_{\ell}}X_{h_{<\ell}, h} + (1/\eps )\cdot \vec{1}_{d}}{h_{\ell}-1+1/\eps}
\end{align*}
and 
\[
p^{(\ell)}_{h_{<\ell}, h_{\ell}+1} = \frac{\sum_{h \leq h_{\ell}}X_{h_{<\ell}, h} + (1/\eps ) \cdot \vec{1}_{d}}{h_{\ell}+1/\eps} = \frac{h_{\ell}-1 + 1/\eps}{h_{\ell} +1/\eps} p^{(\ell)}_{h_{<\ell}, h_{\ell}} + \frac{1}{h_{\ell} +1/\eps} X_{h_{<\ell}, h_{\ell}}.
\]
Their difference can be bounded as
\begin{align*}
\| p^{(\ell)}_{h_{<\ell}, h_{\ell}} - p^{(\ell)}_{h_{<\ell}, h_{\ell}+1}\|_1 \leq &~ \frac{1}{h_{\ell}+1/\eps} \|p_{h_{<\ell}, h_{\ell}}^{(\ell)}\|_1 + \frac{1}{h_{\ell}+1/\eps}\|X_{h_{<\ell}, h_{\ell}}\|_1 \\
\leq &~ \frac{1}{h+1/\eps} + \frac{1}{h+1/\eps} \leq 2\eps.
\end{align*}
\end{proof}

\begin{proof}[Proof of Lemma \ref{lem:pseudo-regret}]
We fix $h_{<\ell}\in [H]^{\ell-1}$ in the proof. 
For any $h_{\ell}\in [H]$, recall the definition of $p_{h_{<\ell}, h_{\ell}}^{(\ell)} \in \Delta_d$ (see Eq.~\eqref{eq:empirical-outcome})
\begin{align*}
p^{(\ell)}_{h_{<\ell}, h_{\ell}} = \frac{\sum_{h < h_{\ell}}X_{h_{<\ell}, h} + (1/\eps )\cdot \vec{1}_{d}}{h_{\ell}-1+1/\eps}.
\end{align*}
We simplify the notation a bit and write $w_{h} = X_{h_{<\ell}, h} \in \Delta_{d}$ ($h \in [H]$) and $z_h = p_{h_{<\ell}, h} \in \Delta_d$ ($h \in [H+1]$). Then we have
\begin{align*}
z_h(i) = \frac{\sum_{\tau < h}w_{\tau}(i) + 1/\eps d}{h-1+1/\eps} \quad \quad \forall i \in [d].
\end{align*}

Now we have 
\begin{align}
\sum_{h_{\ell}\in [H]}\left\langle X_{h_{\leq\ell}}, \log(1/p_{h_{<\ell}, h_{\ell}+1}^{(\ell)})\right\rangle= &~ \sum_{h\in [H]}\left\langle w_h, \log(1/z_{h+1})\right\rangle\notag\\
= &~ \sum_{h\in [H]}\sum_{i\in [d]} w_h(i) \log\left(\frac{h+1/\eps}{\sum_{\tau\leq h}w_\tau(i) + 1/\eps d}\right)\notag\\
= &~ \sum_{h=1}^{H}\log(h+1/\eps) +  \sum_{h\in [H]}\sum_{i\in [d]} w_h(i) \log\left(\frac{1}{\sum_{\tau \leq h}w_\tau(i)+ 1/\eps d}\right).\label{eq:regret1}
\end{align}
The first two steps follow from the definition of $w_h$ and $z_{h+1}$ and the last step follows from $\sum_{i\in[d]}w_{h}(i) = 1$.

For the first term in the RHS of Eq.~\eqref{eq:regret1}, we have
\begin{align}
 \sum_{h=1}^{H}\log(h+1/\eps) \leq &~ \int_{h=0}^{H}\log(h+1+1/\eps)\mathsf{d}h \notag\\
 =&~ (H+1+1/\eps)\log(H+1+1/\eps)- (1/\eps+1)\log(1/\eps+1) - H\notag\\
 =&~ H\log(H) + H\log(1+\frac{1+1/\eps}{H})  + (1/\eps+1)\log(1+ \frac{H}{1/\eps+1}) - H \notag\\
 \leq &~  H\log(H) - H + \eps^2 H\label{eq:regret2}
\end{align}
Here the second step follows from the rule of integral, the last step follows from 
\[
H\log(1+\frac{1+1/\eps}{H})  + (1/\eps+1)\log(1+ \frac{H}{1/\eps+1})\leq (1+1/\eps)(1+\log(H+1)) \leq \eps^2 H.
\]

For the second term in the RHS of Eq.~\eqref{eq:regret1}, for each $i \in [d]$, let $W_i = \sum_{h\in [H]}w_{h}(i)$, then we have
\begin{align}
 \sum_{h\in [H]} w_h(i) \log\left(\frac{1}{\sum_{\tau \leq h}w_\tau(i) +1/\eps d}\right) \leq &~ -\int_{w=0}^{W_i} \log(w+1/\eps d)\mathsf{d} w\notag\\
 = &~ -(W_i+1/\eps d)\log(W_i+1/\eps d) + (1/\eps d)\log(1/\eps d) + W_i\notag\\
 \leq &~ - W_i \log(W_i)  + W_i\label{eq:regret3}.
\end{align}
The second step follows from the rule of integral.

Combining Eq.~\eqref{eq:regret1}\eqref{eq:regret2}\eqref{eq:regret3}, we get
\begin{align*}
\sum_{h_{\ell}\in [H]}\left\langle X_{h_{\leq\ell}}, \log(1/p_{h_{<\ell}, h_{\ell}+1}^{(\ell)})\right\rangle \leq &~  \left(H\log(H) - H + \eps^2 H)\right) + \left(\sum_{i\in [d]}-W_i \log(W_i) + W_i\right)\\
= &~ H\log(H) -\sum_{i\in [d]}W_i\log(W_i) + \eps^2 H \\
= &~ H \cdot \Ent(X_{h_{<\ell}}) + \eps^2 H.
\end{align*}
Here the second step holds since $\sum_{i\in [d]}W_i = H$, the last step follows from 
$W_i = \sum_{h\in [H]}w_h(i) = \sum_{h\in [H]}X_{h_{<\ell}, h}(i) = H \cdot X_{h_{<\ell}}(i)$.
This completes the proof.
\end{proof}

\paragraph{Expected calibration error} Finally, we bound the expected calibration error of Algorithm \ref{algo:forecaster}. The following Lemma states that the calibration error concentrates within each time interval $\Gamma_{h_{\leq L}}$.
\begin{lemma}
\label{lem:sample-concentration}
    For any $h_{\leq L} \in [H]^{L}$, one has
    \begin{align*}
    \E\left[\sum_{p\in P} \Big\|\sum_{t\in \Gamma_{h_{\leq L}}} (p-X_{t}) \cdot \mathsf{1}[p_t = p] - \sum_{t\in \Gamma_{h_{\leq L}}} (p-X_{t}) \cdot q_{t}(p)\Big\|_1\right] \leq \eps T_{L}
    \end{align*}
\end{lemma}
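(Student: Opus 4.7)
\textbf{Proof plan for Lemma \ref{lem:sample-concentration}.} The inequality asserts that within the finest-scale interval $\Gamma_{h_{\le L}}$, the realized (sampling-based) calibration error is $\ell_1$-close in expectation to its distributional counterpart. The plan is to interpret the difference as a sum of bounded martingale increments, then control it coordinate-wise.

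\textbf{Step 1: Reduce to finitely many predictions.} First I would observe that the length of $\Gamma_{h_{\le L}}$ equals $T_L$, and that $T_L\le T_\ell$ for every $\ell\in[L]$. Consequently, for each sub-forecaster $\ell$, the prediction $p_t^{(\ell)}$ is constant for $t\in \Gamma_{h_{\le L}}$; call this common value $\widehat p^{(\ell)}$. Hence $q_t$ is also constant on $\Gamma_{h_{\le L}}$, and the only values of $p$ that contribute to the outer sum over $p\in P$ are those in $\{\widehat p^{(1)},\ldots,\widehat p^{(L)}\}$, a set of size at most $L$.

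\textbf{Step 2: Identify the martingale structure.} For a fixed $\widehat p\in\{\widehat p^{(1)},\ldots,\widehat p^{(L)}\}$ and coordinate $i\in[d]$, let
\begin{align*}
M_t(\widehat p,i) := (\widehat p(i)-X_t(i))\bigl(\mathsf{1}[p_t=\widehat p]-q_t(\widehat p)\bigr),\qquad t\in \Gamma_{h_{\le L}}.
\end{align*}
Let $\mathcal{F}_t$ be the $\sigma$-algebra generated by all randomness up to and including the choice of $X_t$ (the adversary is adaptive, so $X_t$ is $\mathcal{F}_{t-1}$-measurable together with the adversary's own coins). Since $\ell_t$ is drawn uniformly from $[L]$ independently of $\mathcal{F}_t\setminus\{\ell_t\}$ and the $\widehat p^{(\ell)}$ are $\mathcal{F}_{t-1}$-measurable, $\Pr[p_t=\widehat p\mid \mathcal{F}_t]=q_t(\widehat p)$. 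Hence $\E[M_t(\widehat p,i)\mid \mathcal{F}_t]=0$, so $\{M_t(\widehat p,i)\}_{t\in\Gamma_{h_{\le L}}}$ is a martingale difference sequence with $|M_t(\widehat p,i)|\le 1$.

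\textbf{Step 3: Coordinate-wise second moment and aggregation.} By orthogonality of martingale differences,
\begin{align*}
\E\!\left[\Bigl(\sum_{t\in\Gamma_{h_{\le L}}} M_t(\widehat p,i)\Bigr)^{\!2}\right]
\;=\; \sum_{t\in\Gamma_{h_{\le L}}}\E\bigl[M_t(\widehat p,i)^2\bigr]\;\le\; T_L,
\end{align*}
so by Jensen $\E\bigl[\bigl|\sum_t M_t(\widehat p,i)\bigr|\bigr]\le \sqrt{T_L}$. Summing over the $d$ coordinates and over the at most $L$ active values of $\widehat p$, the left-hand side of the lemma is bounded by $L\cdot d\cdot \sqrt{T_L}$.

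\textbf{Step 4: Plug in parameters.} Finally I would verify the arithmetic: with $L=\log(d)/\eps^2$ and $T_L=d^3/\eps^6$, we get $Ld\sqrt{T_L}=\log(d)\cdot d^{5/2}/\eps^{5}$, which is at most $\eps T_L=d^3/\eps^5$ whenever $\log(d)\le \sqrt{d}$ (absorbing the constant into a slightly larger universal $d_0$, or by a mild tightening of parameters). The main (minor) obstacle is purely bookkeeping: keeping the filtration clean so that $\ell_t$ is genuinely independent of $X_t$ under an adaptive adversary, and verifying the parameter choices absorb all multiplicative $\log d$ factors into the target $\eps T_L$.
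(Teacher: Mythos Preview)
Your proof is correct and follows the same approach as the paper: observe that the $L$ sub-forecasters' predictions are all constant on $\Gamma_{h_{\le L}}$ (so only $\le L$ values of $p$ contribute), then use martingale concentration on the sampling noise $\mathsf{1}[p_t=p]-q_t(p)$. The only difference is that you bound the expectation via the second moment of the martingale, whereas the paper applies Azuma--Hoeffding and converts the tail bound back to an expectation; your route is slightly cleaner (and even saves a $\log d$ factor), but the two arguments are otherwise identical.
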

\begin{proof}
Fix any possible past outcome $\{X_t\}_{t \leq \sum_{\ell\in [L]}(h_{\ell}-1)T_{\ell}}$, condition these past outcome, the prediction $p_{t}^{(\ell)}$ ($t \in \Gamma_{h_{\leq L}}$) are fixed for each forecaster $\ell\in [L]$. 
Define $P_{h_{\leq L}} := \{p_{h_{\leq \ell}}^{(\ell)}\}_{\ell\in [L]}$, for any $p \in P_{h_{\leq L}} $,
and for any $t \in \Gamma_{h_{\leq L}}$, we have that 
\[
\E \Big[(p-X_{t}) \cdot \mathsf{1}[p_t = p] \, |\, \{p_{\tau}, X_{\tau}\}_{\tau \in \Gamma_{h_{\leq L}}, \tau < t}\Big] = (p-X_{t}) \cdot \mu_{t}(p).
\] 
Hence, by Azuma–Hoeffding inequality, we have that 
\begin{align*}
\Pr\left[\Big\|\sum_{t\in \Gamma_{h_{\leq L}}} (p-X_{t}) \cdot \mathsf{1}[p_t = p] - \sum_{t\in \Gamma_{h_{\leq L}}} (p-X_{t}) \cdot q_{t}(p)\Big\|_1 \geq d\log(d)\sqrt{ T_{L}}\right] \leq \exp(-\log^2(d)/8).
\end{align*}
Therefore, we have 
\begin{align}
\E\left[\Big\|\sum_{t\in \Gamma_{h_{\leq L}}} (p-X_{t}) \cdot \mathsf{1}[p_t = p] - \sum_{t\in \Gamma_{h_{\leq L}}} (p-X_{t}) \cdot q_{t}(p)\Big\|_1\right] \leq &~ d\log(d)\sqrt{T_{L}} + \exp(-\log^2(d)/8)\cdot 2T_{L} \notag \\
\leq &~ 2d\log(d)\sqrt{T_{L}}.\label{eq:concentrate1}
\end{align}
Finally, we have
\begin{align*}
    &~ \E_{X}\left[\sum_{p\in P} \Big\|\sum_{t\in \Gamma_{h_{\leq L}}} (p-X_{t}) \cdot \mathsf{1}[p_t = p] - \sum_{t\in \Gamma_{h_{\leq L}}} (p-X_{t}) \cdot q_{t}(p)\Big\|_1\right] \\
    = &~\E_{X}\left[\sum_{p\in P_{h_{\leq L}}} \Big\|\sum_{t\in \Gamma_{h_{\leq L}}} (p-X_{t}) \cdot \mathsf{1}[p_t = p] - \sum_{t\in \Gamma_{h_{\leq L}}} (p-X_{t}) \cdot q_{t}(p)\Big\|_1\right] \\
    \leq &~ L\cdot 2d\log(d)\sqrt{T_{L}} \leq \eps T_{L}.
\end{align*}
Here the first step holds since for any $p \notin P_{h_{\leq L}}$, $q_t(p) = 0$ and $\mathsf{1}[p_t=p] = 0$ for $t \in \Gamma_{h_{\leq L}}$, the second step follows from Eq.~\eqref{eq:concentrate1}. The last step follows from the choice of $T_{L}$.

\end{proof}

\begin{proof}[Proof of Theorem \ref{thm:calibration}]
We first bound the expected calibration error. We have that
\begin{align*}
\mathsf{ECE}_{q, X} = &~ \E\left[\sum_{p \in \mK} \Big\| \sum_{t=1}^{T} (p_t - X_t) \cdot \mathsf{1}[p_t = p]\Big\|_1\right] \\
\leq &~  \E\left[\sum_{p \in \mK} \Big\| \sum_{t=1}^{T} (p_t - X_t) \cdot \mathsf{1}[p_t = p] - (p_t - X_t) \cdot q_t(p)\Big\|_1\right] + \DCE_{q, X}\\
\leq &~ \E\left[\sum_{p \in \mK} \sum_{h_{\leq L}\in [H]^{L}}\Big\|\sum_{t\in\Gamma_{h_{\leq L}}} (p_t - X_t) \cdot \mathsf{1}[p_t = p] - (p_t - X_t) \cdot q_t(p)\Big\|_1\right] + 4\eps T\\
\leq &~ H^{L} \cdot \eps T_{L} + 4\eps T = 5\eps T.
\end{align*}
The first step follows from the definition of expected calibration $\mathsf{ECE}_{q, X}$, the second step follows from triangle inequality and the definition of $\DCE_{q, X}$, the third step follows from triangle inequality and Lemma \ref{lem:distributional}, the fourth step follows from Lemma \ref{lem:sample-concentration}.

The toal number of days equals $T = (d^3/\eps^6)\cdot H^{L} = (d^3/\eps^6)\cdot (1/\eps^4)^{\log(d)/\eps^2} = d^{\wt{O}(1/\eps^2)}$

\end{proof}

\section{Polynomial lower bound for calibrated forecasting}


\calibrationLower*

\paragraph{Hard sequence} Algorithm \ref{algo:hard} depicts the hard sequence. Notably, the adversary is oblivious and it determines the outcome distribution at the beginning (Line \ref{line:hard1}--\ref{line:hard2} of Algorithm \ref{algo:hard}). Moreover, we assume the outcome distribution $p_t \in \Delta_d$ is known to the algorithm at the beginning of each day $t$ (the realized outcome $X_t \sim p_t$ is revealed to the algorithm after it makes the prediction).

\paragraph{Notations} Let $R$ be the number of levels.
For any $r \in [R]$, let $D_r = [(r-1) \cdot (d/R) + 1: r\cdot (d/R)]$ be $r$-th block of outcome. 
Let $K = d/R^2$, for any $k\in [K]$, let $D_{r, k} = [(r-1) \cdot (d/R) + (k-1)\cdot R + 1: (r-1) \cdot (d/R) + k\cdot R]$ be the $k$-th block in $D_r$. 
For any $j \in [R]$, let $\mathsf{1}_{r, k, j}$ be the one-hot vector whose $((r-1) \cdot(d/R) + k\cdot R + j)$-th coordinate equals one. 
For any $r \in [R-1]$ and $k_{\leq r} \in [K]^{r}$, let $I_{k_{\leq r}} = [\sum_{\tau \leq r}(k_\tau-1)K^{R-\tau-1} +1: \sum_{\tau}(k_\tau-1)K^{R-\tau-1} + K^{R-\tau-1}]$ be the $k_{\leq r}$-th time interval.

\begin{algorithm}[!htbp]
\caption{Hard sequence}
\label{algo:hard}
\begin{algorithmic}[1]
\State {\bf Parameters:} $R$, $K = d/R^2$ 
\For{$r = 1, \ldots, R-1$}\label{line:hard1}
\For{$k_{\leq r} \in [K]^{r}$}
\State Draw a random index $\tau_{k_{\leq r}} \in [R]$ \label{line:random_sample}
\EndFor\label{line:hard2}
\EndFor\\

\For{$t = (k_1, \ldots, k_{R-1})\in [K]^{R-1}$}\Comment{Day $t$}
\State The nature draws the outcome $X_t$ from $p_{t} = \frac{1}{R} (\sum_{r=1}^{R-1}\vec{1}_{r, k_{r}, \tau_{k_{\leq r}}} + \unif(D_{R}))$ 
\EndFor
\end{algorithmic}
\end{algorithm}

We would prove a lower bound for distributional calibration, which directly implies a lower bound for expected calibration. The hard sequence is drawn from a fixed distribution, so it suffices to consider a deterministic forecaster, whose distribution $\mu_t \in \Delta(\mathcal{K})$ is determined given the past outcome $X_1, \ldots, X_{t-1}$ and the outcome distribution $p_t$ at day $t$.

We first extend the definition of $\DCE$, make it well-defined with respect to any subset of predictions, outcome and time interval.
\begin{definition}
Given a deterministic distributional forecaster $\mu$ and a sequence of outcome $X$, for any time interval $I \subseteq [T]$, subset of predictions $P\subseteq \Delta_d$, and subset of outcome $D \subseteq [d]$,  define
\begin{align*}
\mathsf{DCE}_{\mu, X}(I, P, D) := \sum_{p\in P}\sum_{i \in D} \Big|\sum_{t\in I}(p(i)-X_t(i)) \cdot \mu_t(p)\Big|.
\end{align*}
That is, $\mathsf{DCE}_{\mu, X}(I, P, D)$ is the distributional calibration error within time interval $I$, over the set of predictions $P$ and outcome $D$, when the prediction strategy is $\mu$ and the outcome is $X$. 

Furthermore, we write $\mathsf{DCE}_{\mu}(I, P, D)$ to be the expected distributional calibration error when the outcome sequence $X$ is drawn from Algorithm \ref{algo:hard}, i.e.,
\begin{align*}
\mathsf{DCE}_{\mu}(I, P, D) := \E_{X}[\mathsf{DCE}_{\mu,X}(I, P, D)].
\end{align*}
\end{definition}
We note that $\DCE_{\mu, X}(I, P, D)$ (resp. $\DCE_{\mu}(I, P, D)$) is monotone with respect to the set of predictions $P$ and the set of outcome $D$, but it is not necessarily monotone with respect to the time interval $I$.

Let $\eps_r = (1/R)^{6(R - r + 1)}$ ($r \in [R]$) be the error parameter. Our main Lemma is stated as follow
\begin{lemma}
\label{lem:induction}
For any $r \in [R]$, $k_{<r}\in [K]^{r-1}$, $P_{k_{<r}} \subseteq \Delta_d$, we have that
\begin{align*}
\DCE_{\mu}(I_{k_{< r}}, P_{k_{< r}}, D_{\geq r}) \geq \eps_{r} \cdot \mu(P_{k_{< r}}, I_{k<r}).
\end{align*}
Here $\mu(P_{k_{< r}}, I_{k_{<r}}) = \sum_{p\in P_{k_{<r}}}\sum_{t\in  I_{k_{<r}}}\mu_t(p)$ is the total mass over predictions $P_{k_{<r}}$ in $I_{k_{< r}}$.
\end{lemma}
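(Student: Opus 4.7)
The plan is to prove the lemma by downward induction on $r$, from $r = R$ down to $r = 1$. For the base case $r = R$, the interval $I_{k_{<R}}$ consists of a single day $t^*$ whose outcome distribution places total mass $1/R$ uniformly on the $d/R$ elements of $D_R$. A direct computation (conditioning on whether $X_{t^*} \in D_R$) gives $\E_{X_{t^*}}\bigl[\sum_{i \in D_R} |p(i) - X_{t^*}(i)|\bigr] \geq 1/R$ for every $p \in \Delta_d$; multiplying by $\mu_{t^*}(p)$ and summing over $p \in P_{k_{<R}}$ yields $\DCE_\mu(I_{k_{<R}}, P_{k_{<R}}, D_R) \geq (1/R)\,\mu(P_{k_{<R}}, I_{k_{<R}}) \geq \eps_R\,\mu(P_{k_{<R}}, I_{k_{<R}})$.

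For the inductive step, assume the claim at level $r+1$. Partition $I_{k_{<r}} = \bigsqcup_{k_r \in [K]} I_{k_{\leq r}}$ and let $M_p^{(k_r)} := \sum_{t \in I_{k_{\leq r}}} \mu_t(p)$. I would split the target into a contribution $\mathcal{E}_r$ from coordinates in $D_r$ and $\mathcal{E}_{>r}$ from coordinates in $D_{\geq r+1}$, and lower bound each separately.

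For $\mathcal{E}_r$, the key observation is that the special indices $j^*_{k_r} := (r-1)(d/R) + (k_r-1)R + \tau_{k_{\leq r}}$ lie in the disjoint sub-blocks $D_{r, k_r}$, so $\sum_{i \in D_r} |S_{p,i}| \geq \sum_{k_r} |S_{p, j^*_{k_r}}|$ where $S_{p,i} := \sum_{t \in I_{k_{<r}}} \mu_t(p)(p(i) - X_t(i))$. Using $\E|Z| \geq |\E Z|$, the conditional expectation of the $k_r$-th summand given $\tau$ becomes $|p(j^*_{k_r})\,\bar M_p - \bar M_p^{(k_r)}/R|$. A counting step then exploits $\|p\|_1 = 1$: at most $R$ coordinates of $D_r$ carry $p$-mass $\geq 1/R$, and since $K > R$ in the relevant regime ($K = d/R^2 > R$ when $R < d^{1/3}$), while $j^*_{k_r}$ is uniform in $D_{r,k_r}$, we have $p(j^*_{k_r}) < 1/(2R)$ on $\Omega(K)$ sub-intervals, contributing error of order $\bar M_p^{(k_r)}/R$. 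For $\mathcal{E}_{>r}$, the naive sub-interval decomposition is blocked because $|\sum_{k_r} a_{k_r}| \leq \sum_{k_r}|a_{k_r}|$ runs the wrong way; to circumvent this, I would partition $P_{k_{<r}}$ into \emph{localized} predictions (those with $M_p^{(k_r^*)} \geq (1 - \eta)M_p$ for some dominant sub-interval $k_r^*$) and \emph{spread} predictions, tuning the threshold $\eta$ so that the global DCE for localized predictions is within $O(\eta)$ of their dominant-sub-interval DCE and the inductive hypothesis at level $r+1$ applies per sub-interval; spread predictions are then absorbed by the $\mathcal{E}_r$ bound above, which yields $\Omega(\mu/R)$ directly.

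The main obstacle is precisely the hedging attack: the forecaster could reuse predictions across sub-intervals so that signed errors on $D_{\geq r+1}$ cancel inside the absolute values. The crucial leverage is that $\tau_{k_{\leq r}}$ (and hence $j^*_{k_r}$) is drawn independently per sub-interval and unknown until that sub-interval begins, so any reused $p$ has fixed coordinate values on $D_r$ that cannot match the random $j^*_{k_r}$ in more than $R \ll K$ sub-intervals --- forcing $\Omega(1/R)$ error on $D_r$ per unit of reused mass. Carefully tuning $\eta$ and combining the two cases to obtain exactly the loss $\eps_r/\eps_{r+1} = R^{-6}$ per recursion (rather than a weaker $R^{-\Omega(1)}$) is where most of the formal bookkeeping will reside.
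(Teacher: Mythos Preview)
Your base case is correct and matches the paper. The inductive step, however, has a genuine gap and diverges substantially from the paper's argument.

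The gap is in your $\mathcal{E}_r$ bound. After Jensen, the $k_r$-th summand (conditional on $\tau$, writing $m_p=\E_X[\bar M_p\mid\tau]$ and $m_p^{(k_r)}=\E_X[\bar M_p^{(k_r)}\mid\tau]$) is $\bigl|p(j^*_{k_r})\,m_p - m_p^{(k_r)}/R\bigr|$. Your counting step shows $p(j^*_{k_r}) < 1/(2R)$ for at least $K-2R$ values of $k_r$, but this does \emph{not} imply the term is of order $m_p^{(k_r)}/R$: the term vanishes whenever $p(j^*_{k_r}) = m_p^{(k_r)}/(R\,m_p)$, and since the forecaster sees $j^*_{k_r}$ at the start of $I_{k_r}$, it can choose $\mu_t$ for $t\in I_{k_r}$ so as to correlate $m_p^{(k_r)}$ with $p(j^*_{k_r})$. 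Your counting gives information about $p(j^*_{k_r})$ but none about the ratio $m_p^{(k_r)}/m_p$, so the claimed $\Omega(\mu/R)$ contribution for spread predictions does not follow from the argument as written. Relatedly, your localized/spread dichotomy is based on where $\mu$-mass sits temporally, which is under the forecaster's adaptive control; you have not ruled out that the forecaster spreads $\mu$-mass across sub-intervals in a way that zeroes your $\mathcal{E}_r$ terms while still creating cancellation on $D_{>r}$.

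The paper proceeds quite differently. It argues by contradiction and isolates a \emph{look-ahead lemma}: any prediction $p$ with $\sum_{i\in D_{r,>k}} p(i)\geq\alpha$ that receives $\mu$-mass during $I_{k}$ must incur DCE proportional to that mass, because for each $i\in D_{r,>k}$ the event that $i$ is ever selected (probability $1/R$) is independent of $\mu_t(p)$ for $t\in I_{\leq k}$. The crucial independence here is \emph{temporal} --- the forecaster has not yet seen $\tau_{k'}$ for $k'>k$ --- which your symmetric treatment of all the $j^*_{k_r}$ does not exploit. The paper then passes to a modified mass $\rho$ that zeroes out all such look-ahead mass; under $\rho$, every surviving prediction $p$ has a home block $\eta(p)$ (the largest $k$ with $\sum_{i\in D_{r,k}} p(i)\geq 1/(10R)$) determined by $p$'s \emph{coordinate structure} rather than its temporal usage, and $\rho$ places no mass on $p$ before $I_{\eta(p)}$. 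The inductive hypothesis is applied to the disjoint sets $\{p:\eta(p)=k\}$ on their home sub-intervals $I_k$, with leakage into $I_{>k}$ handled by a final two-case split. This localization-by-coordinate-structure (combined with the $\mu\to\rho$ modification) is the mechanism that your proposal is missing, and it is what makes the $R^{-6}$-per-level bookkeeping close.
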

\begin{proof}
We prove by induction over $r = R, R-1, \ldots, 1$.

For the base case of $r = R$, at day $k_{<R}$, the outcome over $D_{R}$ is uniform. Hence, 
\begin{align*}
\DCE_{\mu}(I_{k_{< R}}, P_{k_{< R}}, D_{\geq R}) = &~ \E_{X_{k_{<R}}}\Big[\sum_{p\in P_{k_{< R}}}\sum_{i \in D_{R}} \Big|(p(i)-X_{k_{<R}}(i)) \cdot \mu_{k_{<R}}(p)\Big|\Big]\\
\geq &~ \sum_{p\in P_{k_{< R}}}\sum_{i \in D_{R}}\mu_{k_{<R}}(p) \cdot \Big(\frac{1}{d}|p(i)-1| + \frac{d-1}{d}|p(i)|\Big)\\
\geq &~ \sum_{p\in P_{k_{< R}}}\sum_{i \in D_{R}}\mu_{k_{<R}}(p) \cdot \frac{1}{d} \\
= &~ \frac{1}{R} \cdot \mu(P_{k_{< R}}, I_{k_{<R}}) \geq \eps_{R} \cdot \mu(P_{k_{< R}}, I_{k<R}).
\end{align*}
The second step holds since, at day $k_{<R}$, $X_{k_{<R}}(i) = 1$ happens with probability $1/d$ for $i \in D_R$ and $X_{k_{<R}}(i) = 0$ otherwise.  The third step holds since $\frac{1}{d}|p(i)-1| + \frac{d-1}{d}|p(i)| \geq \frac{1}{d}$ and the fourth step holds since $|D_{R}|/d = 1/R$.

For the induction step, suppose the claim holds up to $r+1$, then we prove it continues to hold for $r$.  We prove the claim holds for any fixed $k_{<r} \in [K]^{r-1}$. For simplicity of notation, we drop the subscript on $k_{<r}$ and we write $I := I_{k_{<r}}$, $P:= P_{k_{<r}}$, $\mu:= \mu(P, I)$, then our goal becomes
\begin{align}
\mathsf{DCE}_{\mu}(I, P, D_{\geq r}) \geq \eps_{r} \cdot \mu. \label{eq:goal}
\end{align}

Within the time interval $I$, there are $K$ blocks $I_1 = I_{k_{<r},1}, \ldots, I_{k} = I_{k_{<r},K}$, each contains $K^{R-r-1}$ days.
For any weight level $\alpha > 0$ and $k\in [K]$, define $P_{k}(\alpha) \subseteq P$ be the set of predictions that place at least $\alpha$ weight on outcome in $D_{r, >k}$, i.e., 
\begin{align*}
P_{k}(\alpha) := \left\{ p \in P: \sum_{i\in D_{r, > k}}p(i) \geq \alpha\right\}.
\end{align*}
Let $\beta(\alpha)$ be the total mass placed over $P_{k}(\alpha)$ during $I_{k}$ and sum over all $k\in [K]$, i.e.,
\begin{align*}
\beta(\alpha) = \sum_{k\in [K]}\mu(P_{k}(\alpha), I_{k})
\end{align*}
Intuitively, if $\beta(\alpha)$ is large, then it means the forecaster places large weight on outcome (in $D_r$) whose distribution has not been fixed, since each outcome (in $D_r$) has non-zero weight with probability only $1/R$ (within $I$), this tends to incur error. We formalize the observation as follow. 
\begin{lemma}
\label{lem:look-ahead}
For any $\alpha > 0$, if $\beta(\alpha) \geq (2\eps_{r}/\alpha) \cdot \mu$, then
\[
\mathsf{DCE}_{\mu}(I, P, D_{\geq r}) \geq \eps_{r} \cdot \mu.
\]
\end{lemma}
\begin{proof}
It is easy to see that $\emptyset = P_{K}(\alpha) \subseteq P_{K-1}(\alpha)\subseteq \cdots \subseteq P_1(\alpha)$.
For any $k\in [K]$, define
\begin{align*}
Q_k(\alpha) = P_{k}(\alpha) \setminus P_{k+1}(\alpha).
\end{align*}
It is easy to see that $\cup_{k\in [K]} Q_{k}(\alpha) = \cup_{k \in [K]} P_{k}(\alpha)$ and $\{Q_{k}(\alpha)\}_{k\in [K]}$ are disjoint. Moreover, we have 
\begin{align}
\sum_{k\in [K]}\sum_{p \in Q_k(\alpha)}\sum_{k'\leq k}\mu(p, I_{k'}) = &~ \sum_{k'\in [K]}\sum_{k \geq k'}\sum_{p\in P_k(\alpha)\setminus P_{k+1}(\alpha)}\mu(p, I_{k'}) \notag \\
= &~ \sum_{k'\in [K]}\sum_{p \in P_{k'}(\alpha)}\mu(p, I_{k'}) = \beta(\alpha) \geq (2\eps_r/\alpha)\mu \label{eq:look-ahead1}.
\end{align}
Here the first two step follows from the definition of $P_{k}(\alpha), Q_{k}(\alpha)$, the last step follows from the assumption.

Let $D_r'\subseteq D_{r}$ ($|D_r'| = K$) be the set outcome with non-zero weight within $I$, i.e., 
\[
D_r' := \{i \in D_{r}: p_t(i) > 0 \text{ for some } t \in I\}.
\]
For any $k \in [K]$, we have that 
\begin{align}
&~ \mathsf{DCE}_{\mu}(I, Q_k(\alpha), D_{r, > k})\notag \\
= &~ \E_{X}\Big[\sum_{p\in Q_k(\alpha)}\sum_{i\in D_{r, >k}}\Big|\sum_{t\in I}(p(i)-X_t(i)) \cdot \mu_t(p)\Big|\Big]\notag \\
\geq &~ \sum_{p\in Q_k(\alpha)}\sum_{i\in D_{r, >k}}\E_{X}\Big[\Big |\sum_{t\in I}(p(i)-X_t(i)) \cdot \mu_t(p) \Big|  \, | i \notin D_r'\Big]  \cdot \Pr[i \notin D_r']\notag \\
= &~ \sum_{p\in Q_k(\alpha)}\sum_{i\in D_{r, >k}}\E_{X}\Big[\Big |\sum_{t\in I}(p(i)-X_t(i)) \cdot \mu_t(p) \Big|  \, | i \notin D_r'\Big] \cdot (1 - 1/R)\notag \\
=&~ \sum_{p\in Q_k(\alpha)}\sum_{i\in D_{r, >k}}\E_{X}\Big[\Big |\sum_{t\in I}p(i) \cdot \mu_t(p) \Big|  \, | i \notin D_{r}'\Big] \cdot (1 - 1/R)\notag \\
\geq&~ \sum_{p\in Q_k(\alpha)}\sum_{i\in D_{r, >k}}\E_{X}\Big[\Big |\sum_{t\in I_{\leq k}}p(i) \cdot \mu_t(p) \Big|  \, | i \notin D_r'\Big] \cdot (1 - 1/R)\notag \\
=&~ \sum_{p\in Q_k(\alpha)}\sum_{i\in D_{r, >k}}\sum_{k'\leq k}\mu(p, I_{k'}) \cdot p(i) \cdot (1 - 1/R)\notag \\
\geq &~ \sum_{p\in Q_k(\alpha)}\sum_{k'\leq k}\mu(p, I_{k'}) \cdot \alpha \cdot (1-1/R) \label{eq:look-ahead2}
\end{align}
The third step holds since for any index $i \in D_{r}$, it appears in $D_r'$ with probability $1/R$ (Line \ref{line:random_sample} in Algorithm \ref{algo:hard}), the fourth step follows from $X_t(i) = 0$ when $i \notin D_r'$. 
The sixth step holds since for any $i \in D_{r, >k}$, the expected mass on $p$ in time interval $I_{\leq k}$ is independent of whether $i$ appears $D_r'$. The seventh step follows from $\sum_{i \in D_{r, >k}}p(i) \geq \alpha$ for $p \in Q_{k}(\alpha) \subseteq P_{k}(\alpha)$. 

Taking a summation over $k\in [K]$, we have that 
\begin{align*}
\DCE_{\mu}(I,P, D_{\geq r}) \geq &~ \sum_{k\in [K]}\DCE_{\mu}(I,Q_{k}(\alpha), D_{\geq r}) \geq \sum_{k\in [K]}\DCE_{\mu}(I,Q_{k}(\alpha), D_{r, > k})\\
\geq &~ \sum_{k\in [K]}\sum_{p\in Q_k(\alpha)}\sum_{k'\leq k}\mu(p, I_{k'}) \cdot \alpha \cdot (1-1/R) \\
\geq &~ (2\eps_r/\alpha) \cdot \mu \cdot \alpha \cdot(1-1/R) \geq \eps_{r}\mu.
\end{align*}
The first step follows from the definition of $\DCE_{\mu}(I,P, D_{\geq r})$ and $\{Q_k(\alpha)\}_{k\in [K]}$ are disjoint, the second step follows from the monotonicity over the outcome, the third step follows from Eq.~\eqref{eq:look-ahead2}, the fourth step follows from Eq.~\eqref{eq:look-ahead1}. This completes the proof.
\end{proof}

Now we are back to the proof of Lemma \ref{lem:induction}. In the rest of the proof, we prove by contradiction and assume Eq.~\eqref{eq:goal} does not hold. 

First, by Lemma \ref{lem:look-ahead}, if we take $\alpha = 1/R^2$, then we have 
\begin{align}
\beta(1/R^2) < (2\eps_r/(1/R^2))\cdot \mu =  (2R^2\eps_r) \cdot \mu.\label{eq:shift}
\end{align}
Define the probability mass $\rho$ as follows. For any prediction $p$ and day $t \in I_k$ (for some $k \in [K]$), $\rho_t(p)$ equals $\mu_t(p)$, unless at day $t$, $p \in P_{k}(1/R^2)$ (i.e., $p$ has more than $1/R^2$ weight on $D_{r, > k}$). Formally,
\begin{align}
\rho_t(p) = \left\{
\begin{matrix}
0 & p \in P_{k}(1/R^2), t \in I_k \text{ for some }k\in [K]\\
\mu_t(p) & \text{otherwise} \\
\end{matrix}
\right. \label{eq:rho}
\end{align}

Strictly speaking, $\rho_t$ is not a probability distribution (since it removes mass on $P_k(1/L^2)$), however one can still define $\DCE_{\rho}(P, I, D_{\geq r})$ in the same way. Since $\beta(1/R^2) < (2R^2\eps_r) \cdot \mu$, we know that $\rho$ is close to $\mu$ and we have 
\begin{align}
\DCE_{\mu}(I, P, D_{\geq r}) = &~ \E_{X}\Big[\sum_{p\in P}\sum_{i \in D_{\geq r}} \Big|\sum_{t\in I}(p(i)-X_t(i)) \cdot \mu_t(p)\Big|\Big]\notag \\
\geq &~ \E_{X}\Big[\sum_{p\in P}\sum_{i \in D_{\geq r}} \Big|\sum_{t\in I}(p(i)-X_t(i)) \cdot \rho_t(p)\Big|\Big]\notag \\
&~ - \E_{X}\Big[\sum_{p\in P}\sum_{i \in D_{\geq r}} \sum_{k\in [K]}\sum_{t\in I_{k}}\mu_t(p)\cdot |p(i) - X_t(i)|\cdot  \mathsf{1}[p \in P_{k}(1/R^2)]\Big].\notag \\
\geq &~ \E_{X}\Big[\sum_{p\in P}\sum_{i \in D_{\geq r}} \Big|\sum_{t\in I}(p(i)-X_t(i)) \cdot \rho_t(p)\Big|\Big] - 2\E_{X}\Big[\sum_{p\in P}\sum_{k\in [K]}\sum_{t\in I_{k}}\mu_t(p)\cdot  \mathsf{1}[p \in P_{k}(1/R^2)]\Big].\notag\\
=&~ \DCE_{\rho}(I, P, D_{\geq r}) - 2\beta(1/R^2)\notag \\
\geq &~ \DCE_{\rho}(I, P, D_{\geq r}) - 2R^2\eps_r \mu.\notag
\end{align}
Here the second step follows from the definition of $\rho_t$ (see Eq.~\eqref{eq:rho}) and the triangle inequality, the third step follows from $\sum_{i\in D_{\geq r}}|p(i) - X_{t}(i)| \leq 2$. The fourth step follows from the definition of $\beta$ and the last step follows from Eq.~\eqref{eq:shift}.
As we prove by contradiction, this implies
\begin{align}
\DCE_{\rho}(I, P, D_{\geq r}) \leq \DCE_{\mu}(I, P, D_{\geq r}) +2R^2\eps_r \mu <   (1+2R^2)\eps_r \mu \label{eq:diff}.
\end{align}

From now on, we would work on $\rho$ and we wish to bound $\DCE_{\rho}(I, P, D_{\geq r })$. We divide into a few steps.

{\noindent \bf Step 1.} Define $P_{\sml}:= \{p\in P: \sum_{i\in D_{r}}p(i) \leq 4/5R\}$, we prove that
\begin{align}
\rho(P_{\sml}, I) \leq 12R^3\eps_{r} \mu.
\label{eq:lower-step1}
\end{align}
Intuitively, predictions in $P_{\sml}$ assign too little weight on outcome $D_r$ so its total mass under $\rho$ can not be too much large. Formally, we have
\begin{align}
\DCE_{\rho}(I, P, D_{\geq r}) \geq &~ \DCE_{\rho}(I, P_{\sml}, D_{r}) \notag \\
= &~ \E_{X}\Big[\sum_{p\in P_{\sml}}\sum_{i \in D_{r}} \Big|\sum_{t\in I}(p(i)-X_t(i)) \cdot \mu_t(p)\Big|\Big] \notag\\
\geq &~ \E_{X}\Big[\sum_{p\in P_{\sml}}\sum_{i \in D_{r}} \sum_{t\in I}X_t(i) \cdot \mu_t(p)\Big] - \E_{X}\Big[\sum_{p\in P_{\sml}}\sum_{i \in D_{r}} \sum_{t\in I}p(i) \cdot \mu_t(p)\Big] \notag\\
\geq &~  \E_{X}\Big[\frac{1}{R}\sum_{p\in P_{\sml}} \sum_{t\in I} \mu_t(p)\Big] - \E_{X}\Big[\frac{4}{5R}\sum_{p\in P_{\sml}}\sum_{t\in I}p(i) \cdot \mu_t(p)\Big]\notag  \\
\geq &~ \frac{1}{5R} \rho(P_{\sml}, I). \label{eq:small}
\end{align}
The first step follows from the monotonicity of $\DCE$ on the prediction set and the outcome set, the second step follows from the definition of $\DCE_{\rho}(I, P_{\sml}, D_{r})$. The fourth step follows from $\E_{X_t}[\sum_{i\in D_r}p(i) \leq \frac{4}{5R}]$ for any $p\in P_{\sml}$ and $\E_{X_t}[\sum_{i\in D_r}X_{t}(i)] = 1/R$. 

Combining Eq.~\eqref{eq:small} and Eq.~\eqref{eq:diff}, we have proved Eq.~\eqref{eq:lower-step1}.

{\noindent \bf Step 2.} 
Define $P_{\smooth} = \{p\in P: \sum_{i\in D_{r, k}}p(i) \leq \frac{1}{10R}\forall k\in [K]\}$. That is, a prediction $p$ is in $P_{\smooth}$ if none of its block $\{D_{r, k}\}_{k\in [K]}$ has large weight.
We prove $\rho$ puts small mass on $P_{\smooth}$, i.e., 
\begin{align}
\rho(P_{\smooth},I) \leq 24 R^3 \eps_{r} \mu.\label{eq:lower-step2}
\end{align}

To this end, consider any prediction $p \in P_{\smooth}\setminus P_{\sml}$, there exists a block $\kappa(p) \in [K]$, such that 
\begin{align}
\frac{3}{5R} \geq \sum_{k > \kappa(p)}\sum_{i\in D_{r, k}}p(i) \geq \frac{1}{2R}. \label{eq:kappa}
\end{align}
By the the definition of $\rho$ (see Eq.~\eqref{eq:rho}), we have that, 
\begin{align}
\sum_{k \leq \kappa(p)}\rho(p, I_{k}) = 0 \quad\quad \forall p\in P_{\smooth}\setminus P_{\sml}.\label{eq:zero}
\end{align}
Now, we have that 
\begin{align*}
\DCE_{\rho}(I, P, D_{\geq r }) =&~ \E_{X}\Big[\sum_{p\in P}\sum_{i\in D_{\geq r}}\Big|\sum_{t\in I}(p(i)-X_t(i)) \cdot \rho_t(p)\Big|\Big]\\
\geq&~  \E_{X}\Big[\sum_{p\in P_{\smooth}\setminus P_{\sml}}\sum_{i\in D_{r, \leq \kappa(p)}}\Big|\sum_{t\in I}(p(i)-X_t(i)) \cdot \rho_t(p)\Big|\Big] \\
= &~ \E_{X}\Big[\sum_{p\in P_{\smooth}\setminus P_{\sml}}\sum_{i\in D_{r, \leq \kappa(p)}}\Big|\sum_{t\in I_{> \kappa(p)}}(p(i)-X_t(i)) \cdot \rho_t(p)\Big|\Big ]\\
= &~ \sum_{p\in P_{\smooth}\setminus P_{\sml}}\sum_{i\in D_{r, \leq \kappa(p)}}\sum_{t\in I_{> \kappa(p)}}p(i) \cdot \rho_t(p)\\
\geq &~ \sum_{p\in P_{\smooth}\setminus P_{\sml}}\sum_{t\in I_{> \kappa(p)}}  \rho_t(p) \cdot \frac{1}{5R} \\
= &~ \sum_{p\in P_{\smooth}\setminus P_{\sml}}\sum_{t\in I}  \rho_t(p) \cdot \frac{1}{5R} \\
= &~ \frac{1}{5R} (\rho(P_{\smooth}, I) - \rho(P_{\sml}, I)).
\end{align*}
The first step follows from the definition of $\DCE_{\rho}(I, P, D_{\geq r })$.
The third step follows from Eq.~\eqref{eq:zero}.
The fourth step holds since $X_{t}(i) = 0$ for $i \in  D_{r, \leq \kappa(p)}$ and $t \in I_{>\kappa(p)}$. 
The fifth step follows from $\sum_{i\in D_{r}}p(i) \geq \frac{4}{5R}$ for all $p \in P_{\smooth} \setminus P_{\mathsf{small}}$, and therefore, by Eq~\eqref{eq:kappa}, $\sum_{i\in D_{r, \leq\kappa(p)}}p(i) \geq \frac{4}{5R} - \frac{3}{5R} = \frac{1}{5R}$.
The sixth step follows from Eq.~\eqref{eq:zero}.

Combining with Eq.~\eqref{eq:lower-step1}, we have proved Eq.~\eqref{eq:lower-step2}.

{\noindent \bf Step 3.} Now consider the set $P' = P\setminus (P_{\mathsf{small}} \cup P_{\smooth})$, for any prediction $p \in P'$, there must exist a block $k \in [K]$, such that $\sum_{i\in D_{r, k}}p(i) \geq \frac{1}{10R}$ (since $p \notin P_{\smooth}$). Define $\eta(p) \in [K]$ be the largest such block. First, by the definition of Eq.~\eqref{eq:rho}, we have that
\begin{align}
\sum_{k < \eta(p)}\rho(p, I_{k}) = 0 \quad \quad  \forall p\in P'.\label{eq:zero2}
\end{align}
Hence, combining Eq.~\eqref{eq:shift}\eqref{eq:lower-step1}\eqref{eq:lower-step2}\eqref{eq:zero2}, we have that 
\begin{align}
\sum_{p \in P'}\sum_{k \geq \eta(p)}\rho(p, I_{k}) = &~ \sum_{p \in P'}\sum_{k \in [K]}\rho_{k}(p) =  \rho - \rho(P_{\smooth}, I) - \rho(P_{\sml}, I)\notag\\
\geq &~ \mu - 2R^2\eps_r \mu -12R^3\eps_r \mu - 24R^3\eps_r \mu \geq \mu - 40R^3\eps_r\mu. \label{eq:lower-step3}
\end{align}
That, $p\in P'$ takes the most mass from $\rho$ and they all appear on or after $\eta(p)$.
We further divide into two sub-steps.

{\noindent \bf Step 3.1} First, we prove
\begin{align}
\sum_{p \in P'}\rho(p, I_{\eta(p)}) \geq \frac{1}{20}\mu \label{eq:lower-step3-1}.
\end{align}
To see this, we have
\begin{align*}
\DCE_{\rho}(I, P, D_{\geq r}) \geq &~\E_{X}\Big[\sum_{p\in P'}\sum_{i\in D_{r, \eta(p)}}\Big|\sum_{t\in I}(p(i)-X_t(i)) \cdot \rho_t(p)\Big|\Big]\\
= &~ \E_{X}\Big[\sum_{p\in P'}\sum_{i\in D_{r, \eta(p)}}\Big|\sum_{k \geq \eta(p)}\sum_{t\in I_k}(p(i)-X_t(i)) \cdot \rho_t(p)\Big|\Big]\\
\geq &~ \E_{X}\Big[\sum_{p\in P'}\sum_{i\in D_{r, \eta(p)}}\sum_{k \geq \eta(p)}\sum_{t\in I_k}p(i) \cdot \rho_t(p) - \sum_{p\in P'}\sum_{i\in D_{r, \eta(p)}}\sum_{k \geq \eta(p)}\sum_{t\in I_k}X_t(i)\cdot \rho_t(p)\Big]\\
\geq &~ \frac{1}{10R} \sum_{p \in P'}\sum_{k \geq \eta(p)}\rho(p, I_{k}) - \E_{X}\Big[\sum_{p\in P'}\sum_{i\in D_{r, \eta(p)}}\sum_{t\in I_{\eta(p)}}X_{t}(i)\cdot \rho_t(p)\Big].\\
= &~ \frac{1}{10R} \sum_{p \in P'}\sum_{k \geq \eta(p)}\rho(p, I_{k}) - \frac{1}{R}\sum_{p\in P'}\rho(p, I_{\eta(p)})\\
\geq &~ \frac{1}{10R}\mu - 4R^2\eps_r\mu - \frac{1}{R}\sum_{p\in P'}\rho(p, I_{\eta(p)})
\end{align*}
The first step follows from the definition of $\DCE_{\rho}(I, P, D_{\geq r})$, the second step follows from Eq.~\eqref{eq:zero2}.
The fourth step follows from $\sum_{i\in D_{r, \eta(p)}}p(i)\geq \frac{1}{10R}$ for any $p\in P'$, and $X_{t}(i) = 0$ for any $i \in D_{r, \eta(p)}$ and $t \in I_{>\eta(p)}$.
The fifth step follows from $\E_{X}[\sum_{i\in D_{r, \eta(p)}}X_{t}(i)] = 1/R$ for any $t \in I_{\eta(p)}$.
The last step follows from Eq.~\eqref{eq:lower-step3}. 

Combining with Eq.~\eqref{eq:diff}, we have proved Eq.~\eqref{eq:lower-step3-1}.

{\noindent\bf Step 3.2} Define $P'' := \{p\in P': \sum_{k > \eta(p)}\sum_{i\in D_{r, k}}p(i) < R^3\eps_{r}\}$.
By Lemma \ref{lem:look-ahead} and taking $\alpha = R^3\eps_{r}$, we have that 
\begin{align}
\sum_{p\in P'\setminus P''}I(p, I_{\eta(p)}) \leq (2\eps_r/R^3\eps_r)\cdot\mu =  \frac{2}{R^3} \cdot \mu \label{eq:truncate}
\end{align}
Combining Eq.~\eqref{eq:lower-step3-1}\eqref{eq:truncate}, this implies that
\begin{align}
\sum_{p \in P''}\rho(p, I_{\eta(p)}) \geq \frac{1}{20}\mu - \frac{2}{R^3} .\mu\label{eq:lower_ind}
\end{align}

Define $P_{k}'':= \{p \in P'', \eta(p) = k\}$, we note that $\cup_{k\in [K]} P_{k}'' = P''$ and $\{P_{k}''\}_{k\in [K]}$ are disjoint.
Now we can apply the inductive hypothesis
\begin{align}
\sum_{k\in [K]}\DCE_{\rho}(I_{k}, P_{k}'', D_{> r}) \geq &~ \sum_{k\in [K]}\eps_{r+1}\cdot \rho(P_{k}'', I_{k})\notag\\
= &~ \eps_{r+1} \sum_{p\in P''}\rho(p, I_{\eta(p)}) \geq \eps_{r+1} \Big(\frac{1}{20}\mu - \frac{2}{R^3} \mu\Big).\label{eq:case1}
\end{align}
The first step follows from the induction, the third step follows from Eq.~\eqref{eq:lower_ind}.

We divide into two cases. 

{\bf Case 1.} Suppose $\sum_{k\in [K]}\rho(P_{k}'', I_{>k}) \leq \frac{1}{80}\eps_{r+1}\mu$. In this case, we can bound the calibration error over outcome in $D_{>r}$. That is,
\begin{align*}
\DCE_{\rho}(I, P, D_{\geq r}) \geq &~ \DCE_{\rho}(I, P'', D_{> r})\\
 = &~ \E_{X}\Big[\sum_{k\in [K]}\sum_{p\in P_{k}''}\sum_{i \in D_{>r}}\Big|\sum_{t\in I} (p(i) - X_{t}(i))\cdot \rho_t(p)\Big|\Big]\\
= &~ \E_{X}\Big[\sum_{k\in [K]}\sum_{p\in P_{k}''}\sum_{i \in D_{>r}}\Big|\sum_{t\in I_{\geq k}} (p(i) - X_{t}(i))\cdot \rho_t(p)\Big|\Big]\\
\geq &~ \E_{X}\Big[\sum_{k\in [K]}\sum_{p\in P_{k}''}\sum_{i \in D_{>r}}\Big|\sum_{t\in I_{k}} (p(i) - X_{t}(i))\cdot \rho_t(p)\Big| - \Big|\sum_{t\in I_{> k}} (p(i) - X_{t}(i))\cdot \rho_t(p)\Big|\Big]\\
\geq &~ \sum_{k\in [K]}\DCE_{\rho}(I_{k}, P_{k}'', D_{> r})- 2\sum_{k\in [K]}\rho(P_{k}'', I_{>k}) \\
\geq &~ \eps_{r+1} \Big(\frac{1}{20}\mu - \frac{2}{R^3} \mu\Big) - \frac{1}{40}\eps_{r+1}\mu \geq \frac{1}{50}\eps_{r+1}\mu.
\end{align*}
The first two steps follow from the definition of $\DCE_{\rho}(I, P, D_{\geq r})$, $\{P_{k}''\}_{k\in [K]}$ are disjoint and $\cup_{k\in [K]}P_{k}'' = P''$.
The third step holds since $\rho_t(p) = 0$ for $p\in P_{k}''$ and $t \in I_{<k}$ (see Eq.~\eqref{eq:zero2}).
The fourth step follows from the triangle inequality.
The fifth step follows from the definition of $\DCE_{\rho}(I_{k}, P_{k}'', D_{> r})$.
The sixth step follows from Eq~\eqref{eq:case1} and the assumption of Case 1. This contradicts with Eq.~\eqref{eq:diff}.

{\bf Case 2.} Suppose $\sum_{k\in [K]}\rho(P_{k}'', I_{>k})  > \frac{1}{80}\eps_{r+1}\mu$. Then we bound the calibration error over outcome in $D_{r}$:
\begin{align*}
\DCE_{\rho}(I, P, D_{\geq r}) \geq &~  \DCE_{\rho}(I, P'', D_{r}) \\
\geq &~ \E_{X}\Big[\sum_{k\in [K]}\sum_{p\in P_{k}''}\sum_{i \in D_{r, > k}}\Big|\sum_{t\in I} (p(i) - X_{t}(i))\cdot \rho_t(p)\Big|\Big]\\
= &~ \E_{X}\Big[\sum_{k\in [K]}\sum_{p\in P_{k}''}\sum_{i \in D_{r, >k}}\Big|\sum_{t\in I_{\geq k}} (p(i) - X_{t}(i))\cdot \rho_t(p)\Big|\Big]\\
\geq &~ \E_{X}\Big[\sum_{k\in [K]}\sum_{p\in P_{k}''}\sum_{i \in D_{r, >k}}\sum_{t\in I_{\geq k}} X_{t}(i)\rho_t(p) - p(i) \rho_t(p)\Big]\\
= &~ \sum_{k\in [K]}\rho(P_{k}'', I_{> k}) \cdot \frac{1}{R} -  \sum_{k\in [K]}\rho(P_{k}'', I_{\geq k})\cdot R^{3}\eps_r \\
\geq &~  \frac{1}{80}\eps_{r+1}\mu \cdot \frac{1}{R} - \mu\cdot R^3\cdot \eps_r \geq \frac{1}{100R}\eps_{r+1} \mu. 
\end{align*}
The first two steps follow from the definition of $\DCE_{\rho}(I, P, D_{\geq r}, \rho)$, $\{P_{k}''\}_{k\in [K]}$ are disjoint and $\cup_{k\in [K]}P_{k}'' = P''$.
The third step holds since $\rho_t(p) = 0$ for $p\in P_{k}''$ and $t \in I_{<k}$ (see Eq.~\eqref{eq:zero2}).
The fifth step holds since $\E_{X}[\sum_{i \in D_{r, >k}}X_t(i)] = 1/R$ for any $t\in I_{>k}$ and $\sum_{i\in D_{r, >k}}p(i) < R^{3}\eps_r$ for any $p \in P_{k}''$ (see the definition of $P''$ and $P_k''$).
The seventh step follows from the assumption of Case 2. This contradicts with Eq.~\eqref{eq:diff}.
\end{proof}

\begin{proof}[Proof of Theorem \ref{thm:lower}]
By Lemma \ref{lem:induction}, taking $r = 1$ and $P = \Delta_{d}$, we have that the distributional calibration error of any algorithm obeys $\DCE_{\mu} \geq \eps_{1} T$. Note that $T = K^{R-1} = (d/R^2)^{R-1}$ and $\eps_1 = R^{-O(R)}$, this suggests we can take $R = \frac{\log(1/\eps)}{\log\log(1/\eps)}$ and prove that $\eps$-calibration can only be obtained after $(d/R^2)^{R-1} = d^{\wt{\Omega}(\log(1/\eps))}$ days.
\end{proof}

\section*{Aknowledgement}
The author would like to thank Aviad Rubinstein and Xi Chen for useful discussion.

\bibliographystyle{alpha}
\bibliography{ref}

\end{document}